\newtcolorbox[auto counter]{mybox}[2][]{
  float,
  floatplacement=tbp,
  title={#2},
  colback=gray!10,
  colframe=gray!50,
  coltitle=black,
  #1
}
\title[An Interpretable Automated Mechanism Design Framework with Large Language Models]{An Interpretable Automated Mechanism Design Framework with Large Language Models}
\author[JL]{Jiayuan Liu}
\affiliation{%
  \institution{Carnegie Mellon University}
  % \city{Pittsburgh}
  % \country{United States}
}
\email{jiayuan4@andrew.cmu.edu}
\author[MG]{Mingyu Guo}
\affiliation{%
  \institution{University of Adelaide}
  % \city{Adelaide}
  % \country{Australia}
}
\email{mingyu.guo@adelaide.edu.au}
\author[VC]{Vincent Conitzer}
\affiliation{%
  \institution{Carnegie Mellon University}
  % \city{Pittsburgh}
  % \country{United States}
}
\email{conitzer@cs.cmu.edu}
\begin{abstract}
Mechanism design has long been a cornerstone of economic theory, with traditional approaches relying on mathematical derivations.  More recently, {\em automated} mechanism design approaches have started to be used, most recently including {\em differentiable economics} methods that leverage neural networks to design payments and allocations (e.g., RegretNet and GemNet). While both analytical and automated methods have advanced the field, they each face significant weaknesses: mathematical derivations are not automated and often struggle to scale to complex problems, while automated and especially neural-network-based approaches suffer from limited interpretability. To address these challenges, we introduce a novel framework that reformulates mechanism design as a code generation task. Using large language models (LLMs), we generate heuristic mechanisms described in code and evolve them to optimize over some evaluation metrics while ensuring key design criteria (e.g., feasibility, strategy-proofness) through a problem-specific fixing process. This fixing process ensures that any mechanism violating the design criteria is adjusted to satisfy them, albeit with some trade-offs in performance metrics. These trade-offs are factored in during the LLM-based evolution process. The code generation capabilities of LLMs enable the discovery of novel and interpretable solutions, bridging the symbolic logic of mechanism design and the generative power of modern AI. Through rigorous experimentation, we demonstrate that LLM-generated mechanisms achieve competitive performance while offering greater interpretability compared to previous approaches. Notably, our framework can rediscover existing manually designed mechanisms (e.g., Myerson's mechanism in special cases and virtual valuation, reserve price auction, existing redistribution mechanism with theoretical guarantee) and provide insights into neural-network based solutions (i.e., RegretNet) through Programming-by-Example. These results highlight the potential of LLMs to not only automate but also enhance the transparency and scalability of mechanism design, thereby helping to ensure that we can safely deploy the resulting mechanisms in our societies.
\end{abstract}
\begin{document}

% Title page for title and abstract only.
\begin{titlepage}

\maketitle

% Optionally include a table of contents
\vspace{1cm}
\setcounter{tocdepth}{1} % adjust to 1 if desired
\tableofcontents

\end{titlepage}

\section{Introduction}

{\em Mechanism design} is the theory at the heart of market design.  Its appeal is due to its rigorous mathematical foundation in game theory, foundational results such as the revelation principle, which allow us to clearly conceptualize what we seek to optimize, and key findings within the framework. These include positive results such as the VCG mechanisms~\cite{vickrey1961counterspeculation,clarke1971multipart,groves1973incentives} and Myerson's optimal auction~\cite{Myerson81:Optimal}, as well as negative results such as the Myerson-Satterthwaite impossibility~\cite{myerson1983efficient}.

Unfortunately, one way in which mechanism design has fallen short is that many important problems appear intractable to the analytical approach.  Perhaps the best-known example of this is the difficulty of generalizing Myerson's optimal auction in several directions, including multi-item auctions~\cite{daskalakis2015multi}, correlated bidder valuations~\cite{papadimitriou2011optimal}, and dynamic settings~\cite{doepke2006dynamic,zhang2021automated}. These extensions often lead to mechanisms that are analytically intractable. As a result, researchers have been forced to explore alternative approaches, such as approximate mechanism design~\cite{procaccia2013approximate,hartline2013mechanism} and data-driven methods~\cite{bergemann2024data}, to address more complex settings where analytical solutions remain elusive. 

In 2002, {\em automated mechanism design} (AMD)~\cite{Conitzer02:Complexity} was introduced as a way for computer scientists to help address this challenge. The idea is that, since mechanism design problems are naturally framed as optimization problems, we can approach those problems computationally and thereby solve for the optimal mechanism, at least in special cases.  While this approach has seen some successes, in general it faces its own challenges, including both {\em (computational) tractability} -- without any structure, linear-programming formulations do not scale well -- and {\em interpretability} -- unless we are optimizing only within a well-understood parameterized class of mechanisms, the solution may end up being presented to us as a giant inscrutable table of numbers from which little intuition can be gained.

A new approach within AMD that promises to make headway on the tractability challenge is {\em differentiable mechanism design}~\cite{RochetNet,GEMNet,shen2018automated,rahme2021permutation,feng2018deep}, relying on neural networks.  Unfortunately, as neural networks are famous for their lack of interpretability, the other challenge remains.

In this paper, we aim to introduce a new automated mechanism design framework that promises to address the interpretability issue. Classically, how do we describe the mechanisms found using the analytical approach—VCG mechanisms, Myerson's auction, etc.? Not as neural networks or giant tables of numbers, but as formulas, as code. This suggests a role for large language models (LLMs): If we can use LLMs to generate an (ideally brief) specification of how the mechanism runs, then we can analyze and understand this specification far more naturally than any giant table of numbers. Rapid progress in LLM, along with advances in associated techniques such as neurosymbolic approaches that combine neural networks with symbolic reasoning and improvements in reasoning abilities, suggests that this approach is likely to grow in success over the years.

Our primary contribution is to build an interpretable automated mechanism design framework with LLMs. 
Our method has the following advantages compared to the existing work. 
\begin{enumerate}
    \item Our method is automated. For a specific setting, even if it is possible to manually design a good mechanism and analytically derive its expected or worst-case guarantees, in general the methods used to achieve this are highly ad hoc. Even slight variations in the problem setup may require an entirely new analytical technique. 
    In contrast, our techniques, like AMD techniques in general, automate the process, generating mechanisms without requiring manual intervention, which are more flexible and applicable to complex and dynamic environments where manual design is infeasible. 
    \item Our method results in interpretable specifications of mechanisms. Unlike neural network-based approaches, which often produce ``black-box'' solutions, our framework generates mechanisms in the form of human-readable code. This can improve our understanding of these mechanisms and thereby refine our intuitions in general.  Even if we are automatically designing mechanisms for the purpose of their direct deployment in practice (rather than for the purpose of better economic-theoretic understanding), their interpretability is valuable in that it may allow provable guarantees, for example, from the perspectives of fairness, safety, and other ethical concerns.  It may also result in more buy-in into the mechanism from other parties. 
    Interpretability also contributes to safety, as these human-readable mechanisms can be rigorously tested and validated prior to deployment. 
    \item Our method performs well, and, as we argued, its performance is likely to improve along with improvements in LLMs and related techniques.  
\end{enumerate}

The remainder of this paper is laid out as follows. 
Sec.~\ref{sec:relatedwork} reviews related work in automated mechanism design, differentiable economics, LLMs, and evolutionary algorithms. 
Sec.~\ref{sec:framework} introduces our main method AMD-LLM that frames mechanism design as a code generation task using LLMs, and describes its workflow along with its key components. In addition, this section extends the framework to NNA-AMD-LLM by incorporating neural networks for enhanced performance. 
Sec.~\ref{sec:rediscover} demonstrates our method's ability to rediscover the virtual valuation function in independent bidders auctions. 
Sec.~\ref{sec:redistribution} applies the framework to VCG redistribution mechanism design in multi-bidder multi-unit unit-demand auctions, as an application example to social welfare maximization problems. 
Sec.~\ref{sec:correlated} explores how our framework can be applied to revenue maximization problems in a correlated bidders auction setting, as an application example to revenue maximization problems. 
Sec.~\ref{sec:conclusion} concludes the paper and proposes some future research directions.

\section{Related Work}\label{sec:relatedwork}

In this section, we review the various lines of related research on which this paper is based.

\subsection{Automated Mechanism Design}
The analytical results of optimal auction design have been limited to simple mechanism design settings such as variations of single-bidder settings. Theoretical studies have not been able to explore the full intricacy of mechanism design; a new direction, automated mechanism design (AMD), was introduced to help in this exploration. 

Discretizing the players' type spaces allows for the AMD problem to be solved as a linear-programming problem (or, if a deterministic mechanism is desired, a mixed-integer linear program)~\cite{Conitzer02:Complexity,Conitzer04:Self,conitzer2003applications}.
But the cost of such generality is that it is hard to scale this approach.
Several more scalable approaches to AMD have been developed to address scalability issues. One such approach is parametric mechanism design, which restricts the search space to a specific class of parameterized mechanisms. This method has proven effective in various settings, including helping conjecture and proving analytical results. Examples include auction design~\cite{Sandholm15:Automated,sandholm2003automated} and the design of mechanisms that redistribute as much of the auction revenue back to bidders as possible~\cite{cavallo2008efficiency,guo2016competitive,guo2007worst,guo2008undominated,guo2024worst,guo2014better,guo2012worst,guo2011vcg,guo2010optimal,guo2008better} (for an overview of this approach, see~\cite{Guo10:Computationally}).  
On the other hand, in a sense this approach is only partly automated, in that specifying a good parameterized class of mechanisms still requires human insight. Consequently, this approach has been successful only in some domains.

\subsection{Differentiable Auction Design and Neurosymbolic AI} 
More recently, differentiable methods have been introduced that use neural networks to do auction design~\cite{RochetNet,GEMNet,shen2018automated,rahme2021permutation,feng2018deep}. The main challenge is to fulfill the expressiveness, strategy-proofness, and multi-bidder requirements. In terms of expressiveness, though the optimal auctions are represented by the neural network, the resulting mechanisms are hard to interpret from the weights of the neural network.

A recent direction for AI research, neurosymbolic approaches that combine neural networks with symbolic reasoning, is promising for solving the interpretability issue. These methods have shown success in learning interpretable program models, improving efficiency and generalization in various tasks, and enhancing AI system explainability~\cite{hitzler2022neuro,sarker2022neuro,sheth2023neurosymbolic}. By bridging neural computation with symbolic representation, neurosymbolic AI could potentially address interpretability challenges in mechanism design while maintaining computational efficiency.

\subsection{Large Language Models and Code Generation}
Large Language Models (LLMs) have revolutionized natural language processing, demonstrating remarkable capabilities in reasoning, problem-solving, and text generation~\cite{zhao2023survey}. Recent advancements have led to models with enhanced abilities in multi-step reasoning and logical deduction, such as GPT-o1 and DeepSeek-R1\footnote{GPT-o1: \url{https://platform.openai.com/docs/models\#o1}, DeepSeek-R1: \url{https://github.com/deepseek-ai/DeepSeek-R1}.}~\cite{xiang2025towards,guan2025rstar,liu2024deepseek,mercer2025brief}.

In the realm of code generation, LLMs have shown impressive capabilities~\cite{nijkamp2022codegen,fried2022incoder,chen2021evaluating}.
Code generation refers to the automatic creation of source code from high-level descriptions, specifications, or natural language prompts. LLMs have achieved notable performance in code generation tasks, such as solving problems from coding competitions~\cite{AlphaCode}, debugging~\cite{Program-Repair-Inferfix,chen2023teaching,liventsev2023fully}, and improving the given code~\cite{Code-edits}. One reason for LLMs' excellent performance is their training on vast datasets of both natural language and code, enabling them to understand context, generate syntactically correct code, and work across multiple programming languages.
These works provide the foundations for solving various tasks via LLM-powered code generation. 
Developers can already take advantage of existing LLM-based code generation tools such as GitHub Copilot\footnote{\url{https://github.com/features/copilot}} to assist with code explanation, debugging, and writing. 
Given the recent advancements in LLMs, it is natural to explore the application of LLM-based code generation techniques to mechanism design problems.

\subsection{Evolutionary Algorithms}
Evolutionary Algorithms (EAs) are a class of optimization techniques inspired by biological evolution. These algorithms leverage principles such as natural selection, mutation, and crossover to solve complex problems in computer science and engineering, iteratively improving a population of candidate solutions through score-based selection (often called ``fitness-based'' in the EA literature) and stochastic variation operators. 
EAs excel in navigating high-dimensional, multi-modal, and non-differentiable search spaces where traditional gradient-based methods often get stuck~\cite{yu2010introduction,slowik2020evolutionary}. 

Recent works combine EAs with LLMs, introducing EA-based prompt engineering, particularly for mathematical and algorithmic discovery.
FunSearch~\cite{FunSearch} iteratively generates improved solutions in the form of computer ``function'' code given previous good solutions using LLMs, verifies the generated solutions with an automated ``evaluator'', then maintains a pool of good solutions with the island-based evolutionary method to encourage exploration and avoid getting trapped at sub-optimal solutions.
Evolution of Heuristics (EoH)~\cite {liu2024evolution} proposes a similar heuristic evolution framework as FunSearch.  EoH explicitly specifies the evolution strategies and incorporates one of these strategies into the prompt in each iteration. The LLM can then understand this strategy prompt and evolve accordingly. 
Self-Taught Optimizer (STOP)~\cite{zelikman2023self} investigates sequential heuristic optimization techniques with other optimization techniques in addition to evolutionary algorithms. The techniques discussed in STOP include multi-armed bandits, tree search, and simulated annealing-based search. 
There are also techniques that utilize EAs to do automatic prompt improvement. An example is Promptbreeder~\cite{fernando2023promptbreeder}, which automatically evolves and optimizes the prompts for the LLM. 
It iteratively improves the task prompts and the mutation prompts, guiding their evolution.

\subsection{Programming-by-Example}
While LLMs are known for their expertise in deductive reasoning tasks (following the input instructions and reasoning as told), they also have strong capabilities in inductive reasoning tasks, where they infer underlying patterns from input-output pairs or cause-effect relationships~\cite{cheng2024inductive}. This capability makes them particularly well-suited for Programming-by-Example (PBE), a technique where a system automatically generates programs or algorithms from example input-output pairs. Instead of manually writing code, users provide examples of the desired behavior, and the system infers the rules to replicate and generalize that behavior for new inputs~\cite{gulwani2017program,halbert1984programming}.
PBE has been successfully applied in various domains, such as automating repetitive tasks in spreadsheets~\cite{wu2023programming}, and synthesizing data transformations~\cite{jin2017foofah,feser2015synthesizing}. This method significantly reduces the need for manual coding. 
Recent advancements in LLMs have expanded PBE's scope, enabling systems to handle more complex tasks with fewer examples. By leveraging their training data and reasoning capabilities, LLMs can infer patterns and generate robust, generalizable programs~\cite{cheng2024inductive}.

\section{Framing Mechanism Design as an Automated Code Generation Workflow}\label{sec:framework}

In the following sections, we denote bidder $i$'s bid as $b_i$, the bids of all other bidders as $b_{-i}$, and the bids of all bidders collectively as $\vec b$. Similarly, we define $p_i, p_{-i}$, and $\vec p$ for payments. 

\subsection{The Automated Mechanism Design with LLM (AMD-LLM) Framework}\label{sec:amd-llm}

\subsubsection{Key components of the framework}
We automate mechanism design using an LLM code generation workflow that requires three components to be specified:
\begin{enumerate}
    \item Identify the {\em key} part of the mechanism design problem and leave it {\em blank} as the {\em heuristic function} that will be evolved by LLMs. 
    Specifically, we run our experiments in the FunSearch environment~\cite{FunSearch}. Example~\ref{example:auction-spec} below describes a specific auction setting -- Myerson's optimal auction setting~\cite{Myerson81:Optimal}, but with bid correlation. In the context of this example, the key part of the mechanism design problem is the allocation function, which is expressed as a Python function that maps the bids to the allocation result. The initial {\em blank} version simply does not allocate, or we could use any other trivial initialiazation such as
    always allocating to bidder $1$.
    This allocation function will go through the LLM-based evolution process. According to Myerson's characterization, given
    an allocation function, there is only one way to charge the payments without violating either strategy-proofness or individual
    rationality, so the payment function does not need evolution in itself.
    That is, the code that derives
    the correct payments given the current allocation is manually provided as code boilerplate and does not change.
    \item Specify a problem-specific {\em fixing rule} or {\em fixing process} that ensures that the generated heuristic meets all desired mechanism design criteria --  e.g., feasibility, strategy-proofness, etc. -- after the fix is applied. See Sec.~\ref{sec:fix} for a formal definition. In Example~\ref{example:auction-spec}'s context, the auction is strategy-proof if and only if the allocation function is {\em monotone} (i.e., the winner must
    still win if she raises her bid while the other bids stay the same), which is, once again, based on Myerson's characterization.
    Our fixing rule converts any allocation function into a monotone version by raising the critical price faced by the winner to the minimum threshold above which the winner always wins.
    This fixing step is also via manually written code and it is part of the code boilerplate.
    \item Specify an {\em evaluation function} that evaluates the performance of the generated heuristic. 
    In the context of Example~\ref{example:auction-spec}, we simply apply Monte Carlo simulation to evaluate the expected revenue. First, we fix the evolved allocation heuristic function to ensure monotonicity, and then we derive the corresponding payments.
\end{enumerate}

\subsubsection{Fixing process}\label{sec:fix}
Although LLMs have strong code generation capabilities, the heuristic functions they generate may not always satisfy all mechanism design criteria, particularly for complex problems requiring lengthy, detailed prompts. If the generated mechanism fails to meet even one of the ten requirements, it becomes ineffective.
We introduce the fixing process to ensure these important criteria, at the cost of some performance loss in terms of the evaluation metric. 
The fixing process should be universally applicable to any mechanism in the design space. 
For example, in the context of Example~\ref{example:auction-spec}, we can always raise the critical
price faced by the winner to the point that any bid above it ensures that the winner still wins.
This fixing process can fix any allocation heuristic to reach monotonicity, but in some situations, we may need to raise
the critical price significantly, which results in lower expected revenue due to high probability of no allocation (no agents can afford the item).

\begin{definition}[Fixing process]
    A \textit{fixing process} (or \textit{fixing rule} or \textit{fix}) is a problem-specific post-processing procedure applied to any mechanism within a given design space that fails to satisfy one or more specified design criteria. 
    The fixing process transforms such a mechanism into one that meets all required criteria. 
    This procedure is tailored to the specific problem and design constraints, and it may involve trade-offs, such as performance loss, to achieve the desired criteria.
    % \jiayuan{}
\end{definition}

For many mechanism design problems, a na\"ive fix usually exists. For instance, if the generated mechanism fails to meet the criteria, then the fix can simply change it to always produce a default outcome (e.g., no allocation in auction design), resulting in a zero performance score. 
More delicate fixes can be found by investigating the properties of the specific problem, as
shown in Example~\ref{example:auction-spec} for optimal single-item auction design.

We can similarly fix several other auction design problems. Another example involving fixing
the VCG redistribution function is given in Sec.~\ref{sec:redistribution}.

\subsubsection{The workflow}\label{sec:workflow-no-nn}
In each iteration, the evolutionary framework selects one or more existing heuristics from the program database (containing all saved heuristics so far), combines them together using a prompt (i.e., the task description and optionally an evolutionary strategy along the line of ``mutation'', ``crossover'', etc.), and then inputs the combined prompt to the LLM. Next, the LLM outputs a new heuristic. The heuristic will be fixed through the problem-specific fixing process, and evaluated using the evaluation function. The returned value of the evaluation function is called the fitness (or score) in the evolutionary computation literature; it indicates how well the generated heuristic function meets the desired objective, guiding selection and improvement during the evolutionary process. 
This process continues repeatedly, as shown in Fig.~\ref{fig:workflow} (following the ``without NN'' branch).

\begin{figure}[htbp]
    \centering
    \includegraphics[width=\textwidth]{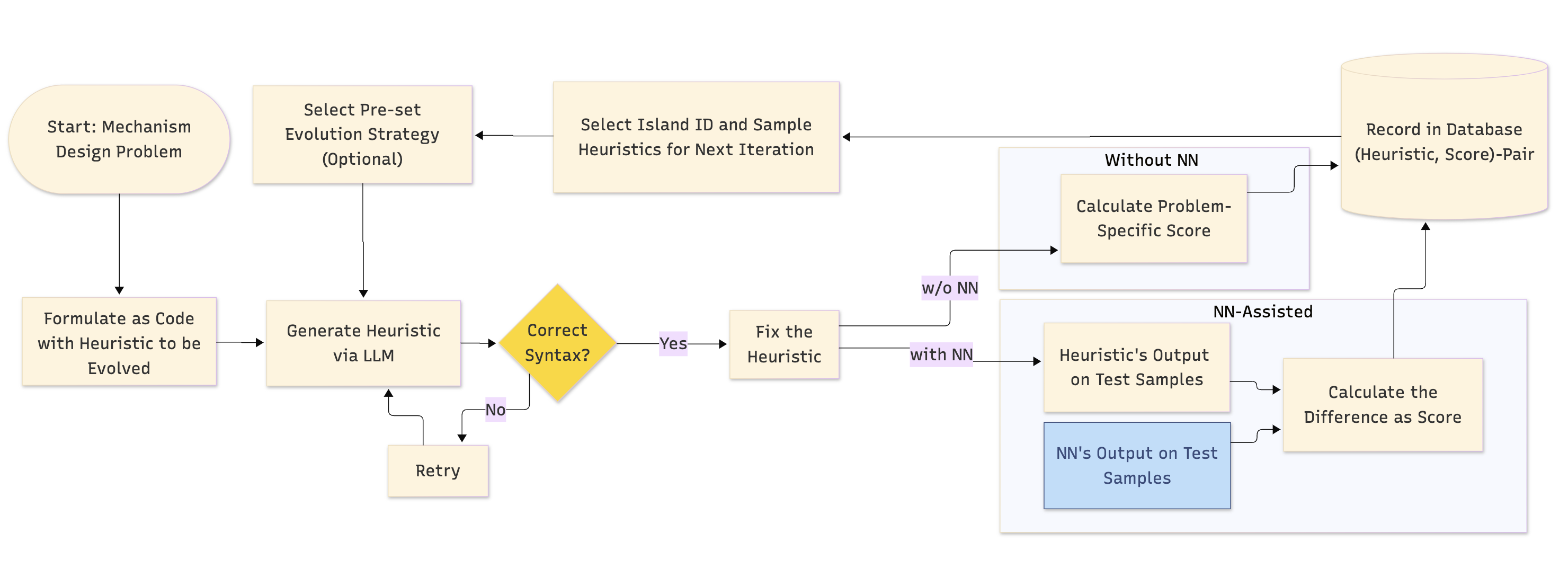}
    \caption{The workflow for frameworks AMD-LLM (following the ``without NN'' branch) and NNA-AMD-LLM (following the ``with NN'' branch). }
    \label{fig:workflow}
\end{figure}

\subsubsection{The Prompts}
There are two kinds of prompts for LLMs: system prompts and user prompts.

\vspace{.1in}
\noindent A
\textit{system prompt} is an instruction that guides the LLM's behavior, tone, and constraints. In our framework, it includes information such as ``you are a mechanism design expert,'' a description of the task, and the required format of the heuristic output. This prompt remains unchanged throughout the workflow.

\vspace{.1in}
\noindent A
\textit{user prompt} is the input or query given by the user to generate a response. In our framework, it contains iteration-specific content, including sampled past heuristics and optionally a sampled pre-set evolution strategy (i.e., ``mutate'', ``crossover'', etc., details in Sec.~\ref{sec:preset-strategies}). The user prompt varies for each iteration.

\subsubsection{The Pre-set Evolution Strategies}\label{sec:preset-strategies}

An evolutionary algorithm normally includes the following key genetic operations.
\begin{itemize}
    \item Crossover: combines material from parents to produce offspring, facilitating exploration of the search space by generating new solution combinations.
    \item Mutation: introduces random modifications to individuals, maintaining population diversity, and preventing premature convergence to suboptimal solutions.
    \item Selection and elitism: selects the desirable offspring to retain in the population, and meanwhile, ensures that high-quality solutions are not lost during evolution.
\end{itemize}

We can pre-set the evolution strategies through explicit prompts and ask the LLM to evolve accordingly. 
\citet{liu2024evolution} used this idea to generate combinatorial optimization heuristics, where they designed five strategies, explicitly asking the LLM to create entirely new algorithms, to draw inspiration from existing algorithms, and to adapt or refine existing algorithms by altering their structure, parameters, or components to improve performance or generalization. These prompts well depict the key genetic operations in evolutionary algorithms and can be generally applied. 
We alter these general strategy prompts to more ad-hoc ones to better fit each of our tested settings (see the examples in later sections). 

It should be noted that explicitly specifying the evolution strategy in the prompt is optional. 
If we want to explicitly pre-set an evolution strategy, then in each iteration, the evolutionary framework will select one of the strategies and combine it into the user prompt and input it into the LLM. 
If we do not explicitly prompt the model about evolutionary strategies (as in the original FunSearch paper), the LLM will autonomously determine how to generate the new heuristic.

\subsubsection{A Detailed Example}
\begin{example}\label{example:auction-spec}
    Code specification~\ref{spec:example-one-item} demonstrates an example of our AMD-LLM workflow for solving Myerson's optimal single-item auction design problem~\cite{Myerson81:Optimal}, with the additional complexity of allowing bid correlation.\footnote{Optimal single-item auction under correlation is known to be computationally hard~\cite{papadimitriou2011optimal}.}
    In this mechanism design setting, a single item must be deterministically allocated to one of the bidders, or left unallocated. The bidders' valuations can be correlated. The mechanism designer aims to maximize the expected total revenue. The design criteria include feasibility (i.e., no over allocation), strategy-proofness, and ex post individual rationality. 
    \begin{enumerate}
        \item We first identify the key part of the mechanism and set it as the heuristic function that will evolve through FunSearch. The key part we choose in Code Specification~\ref{spec:example-one-item} is the allocation function; see lines 36-38. The input of the heuristic function consists of the bidders' bids (a vector of length $n$ where $n$ is the number of bidders). The output of the heuristic function is the allocation function, which is a vector of length $n+1$ with each element in $[0, 1]$; if the $i$-th ($1\leq i \leq n$) index of the output vector is the largest, then allocate the item to the $i$-th bidder; if the $(n+1)$-th index is the largest, then do not allocate to any bidder. 
        \item We need to design a fixing process that ensures all the design criteria. Given a heuristic function (which serves as the allocation function), we need to ensure its monotonicity, which
        is required for strategy-proofness based on Myerson's characterization. Recall that monotonicity
        refers to the property that the winner must still win if she increases her bid while the other
        bids stay the same. We note that under a strategy-proof and individually rational
        mechanism, a bidder essentially faces a take-it-or-leave-it offer, sometimes referred to as the critical price, which depends on the other bids.
        Let the winner be bidder $m$.
        One way to fix a non-monotone allocation is to 
        increase the critical price faced by the winner to the lowest price $p_m$
        that satisfies $\forall b_m'\ge p_m$, we have $\text{argmax}\ \text{heuristic}(b_m',b_{-m})=m$.
        The revenue is $p_m$ if bidder $m$ can afford to pay this price, or $0$ otherwise.
        (All other bidders pay nothing as they don't win.)
        The above fixing process guarantees monotonicity. It is shown in code in lines 19-34 in Code Specification~\ref{spec:example-one-item}. 
        \item For the evaluation function, we sample a batch of bid vectors from the distribution, compute the average revenue of the fixed generated mechanism, and use it as the score for the FunSearch evolutionary update; see lines 3-12. 
    \end{enumerate}
    With this code specification, our AMD framework can call the LLM iteratively to evolve the heuristic and evaluate each generated heuristic according to the workflow shown in Sec.~\ref{sec:workflow-no-nn}. 
\end{example}

\begin{lstlisting}[basicstyle=\ttfamily\scriptsize,mathescape=true,caption={An Example Code Specification for Multi-Bidder Single-Item Auction Design},label={spec:example-one-item}]
import torch
import funsearch
@funsearch.run    # Evaluation function to be executed by FunSearch
def main(num_bidders) -> float:
  batch_size = 10000   
  batched_bids = get_bidders_values(batch_size, num_bidders)
  payment_vec = []
  for bids in batched_bids:
    payment = solve(bids, num_bidders)
    payment_vec.append(payment) 
  score = torch.mean(torch.tensor(payment_vec)).item()
  return score

def solve(bids, num_bidders):
  alloc = heuristic(bids)
  max_index = torch.argmax(alloc)
  if max_index == num_bidders:
    return 0
  # Following starts the fix that ensures monotonicity, and thus truthfulness
  epsilon = 0.00001  # Set this granularity accordingly 
  curr_bid = 1
  while curr_bid >= 0: 
    bids_cloned = bids.clone()
    bids_cloned[max_index] = curr_bid
    alloc_heu = heuristic(bids_cloned)
    alloc_heu = torch.tensor(alloc_heu)
    if torch.argmax(alloc_heu) == max_index:
      curr_bid -= epsilon
    else:
      break
  if curr_bid >= bids[max_index]:
    return 0
  else:
    return curr_bid

@funsearch.evolve    # The heuristic function to be evolved through FunSearch
def heuristic(bids): 
  return None  # Start with a naive heuristic
\end{lstlisting}

\subsection{Neural Network Assisted Automated Mechanism Design Framework with LLM (NNA-AMD-LLM)}\label{sec:nna-amd-llm}
This section integrates the existing differential economics methods --  neural network mechanism design, such as RegretNet~\cite{RochetNet} -- into our AMD-LLM framework. 
Neural networks are often capable of discovering highly effective mechanisms. However, they often lack interpretability, as the deliverables are black-box networks involving intricate arrangements of weights and biases. 

Building on the AMD-LLM framework introduced in Sec.~\ref{sec:amd-llm},  we incorporate an additional RegretNet, which is trained separately in advance to be used as input to our framework, serving as the ``goal'' function. Here, the fitness of an LLM-generated heuristic refers to its similarity to the trained RegretNet. 
The evolutionary process minimizes the difference between the outputs of the LLM-generated heuristic
and the trained RegretNet on the same batch of inputs, guiding the generated heuristics to approximate the trained RegretNet. The end result is a Python code segment that serves as an interpretable approximation of RegretNet's black-box solution.
This aligns with the Program-by-Example (PBE) technique. Here, the ``examples'' are the input-output mappings of RegretNet, and the evolutionary algorithm acts as the program synthesizer. By iteratively generating code candidates and comparing their outputs to RegretNet’s, the process distills the neural network’s implicit logic into explicit, human-readable Python code. 
Unlike traditional PBE, which often uses deductive methods, this approach leverages evolutionary search to navigate the vast space of possible programs. The result is a programmatic explanation of RegretNet, bypassing its opaque internal computations while preserving its functional essence.  

See Fig.~\ref{fig:workflow} (following the ``with NN'' branch) for the workflow of NNA-AMD-LLM.

\section{An Initial Application -- Rediscovering Virtual Valuations}\label{sec:rediscover}

We start with a simple setting to see if our framework is effective enough to rediscover an existing mechanism with moderate complexity. We choose Myerson's single-item auction setting, 
where the optimal auction has been derived by~\citet{Myerson81:Optimal}. There is a single item for sale and there are multiple bidders with independent private valuations $v_i$. The seller wants to design a deterministic, strategy-proof, and ex post individually rational auction that maximizes the expected revenue. She does not know the bidders' exact valuations, but she knows that $v_i$ follows a distribution with cumulative distribution function $F(v_i)$ and probability density function $f(v_i)$.\footnote{For simplicity, we assume the distributions
 are both independent and identical.} 

\citet{Myerson81:Optimal} defines the {\em virtual valuation} to be $r(v_i):=v_i-\frac{1-F(v_i)}{f(v_i)}$.
\citet{Myerson81:Optimal}  proves that the optimal auction in this setting is to allocate the item to the bidder with the highest virtual valuation, or not to allocate if none of the bidders' virtual valuations is non-negative; the expected revenue for any truthful mechanism equals the expected virtual valuation.\footnote{For simplicity, we assume distribution regularity.}

We use our AMD framework with LLM to solve this problem. In the code, we define two extra functions, pdf(v) and cdf(v), which the LLM can directly use in the generated heuristic function. 
The experiments in this section do not use pre-set evolution strategies. The bidders' values are sampled from the Beta(2, 5) distribution. 
The system prompt we use is shown in Prompt~\ref{prompt:rediscovery1}, in which we instruct the LLM to follow some rules, and to try using the pre-defined pdf(v), cdf(v) functions in the heuristics.

\begin{mybox}[label=prompt:rediscovery1]{Prompt \thetcbcounter. The System Prompt for Rediscovering Virtual Valuation}
% \begin{tcolorbox}[title={The System Prompt for Rediscovering Virtual Valuation}, colback=gray!10, colframe=gray!50, coltitle=black]\label{prompt:rediscovery1}
% \textbf{example code} \label{code:example}
\begin{Verbatim}[breaklines=true,fontsize=\scriptsize,breaksymbol={}]
You are a mechanism design expert. Your task is to design a heuristic function for a deterministic revenue-maximizing single-item auction with 1 item and 2 bidders where the two bidders' values are independent, i.e., the seller aims to sell the item to one of the two bidders, or not sell to anyone. The heuristic function's input v is a real number in range [0, 1] which is one bidder's bidding price, following a Beta distribution Beta(alpha, beta). Inputs alpha and beta are the parameters of the Beta distribution that v follows. The heuristic function's output must be a real number. Each bidder will call this heuristic function once and get his or her value. If the outputs of the heuristic function for both players are negative, then do not allocate to any bidder; if at least one of the bidders' heuristic function output values are non-negative, then allocate the item to the bidder with the largest heuristic function output value. The goal of the mechanism designer is to maximize the revenue while ensuring participants have incentives to bid truthfully (incentive compatibility) and are not worse off by participating (individual rationality). The input to the heuristic function is only one bidder's bid v, which is a real number in range [0, 1], as well as parameters alpha and beta, and there are no other args as input, you must not use other parameters without defining. The output must be a real number, can be positive or non-positive. You should remove redundant code (i.e., if a shorter version of code has equivalent function, please use the shorter one). You only need to design one new heuristic function. You must only output a standalone heuristic_v{version} function code, must not output anything other than the heuristic function's code. You should make the code concise and short, and also make the solution as general as possible (can be potentially applied to other distributions). You have the knowledge of the bids' distribution (which is Beta(alpha, beta) distribution) and can use these predefined functions in the heuristic: the probability distribution pdf(v), the cumulative distribution function cdf(v). These predefined functions are already defined in the previous code, so you just need to use pdf(v), cdf(v) in your code. You can also use basic mathematics calculations such as +, -, *, /, as well as other more complex combinations of calculations. In this case, you know that x follows a certain Beta(alpha, beta) distribution for x \in [0, 1]. You can use v, cdf(v), pdf(v) in your heuristic function. You must limit the code in 5 lines. You should also avoid using magic numbers. Please must not use randomization. You must use 2 spaces as indent for Python code. The only inputs are the bidder's bid v, alpha, and beta. No other inputs are allowed! The input v is a real number that represents the bidder's bidding price. You must not use alpha or beta in the heuristic function, instead, please use pdf(v), cdf(v) to describe distribution-related information. 
\end{Verbatim}
% \end{tcolorbox}
\end{mybox}

We experimented on different LLMs, including DeekSeek-V3\footnote{\url{https://github.com/deepseek-ai/DeepSeek-V3}}, DeepSeek-R1\footnote{\url{https://github.com/deepseek-ai/DeepSeek-R1}}, 
Llama-3.1-70B-Instruct\footnote{\url{https://huggingface.co/meta-llama/Llama-3.1-70B-Instruct}}, CodeGemma-7B\footnote{\url{https://huggingface.co/google/codegemma-7b}}, which are all open-weight models, as well as GPT-3.5-Turbo\footnote{\url{https://platform.openai.com/docs/models\#gpt-3-5-turbo}}, and GPT-4o\footnote{\url{https://platform.openai.com/docs/models\#gpt-4o}}.

We first present the results when we use GPT-3.5-Turbo in our AMD framework. Over time, the heuristic function can successfully evolve from ``return 0'' to outputing the virtual valuation formula after about 200 iterations. Afterwards, it still explores other possibilities and then it will find out that the virtual valuation works the best and will gradually converge on that. 
The following is an excerpt of the generated heuristics over time. We chose some of the heuristics generated along the way for purposes of illustration; they are presented in the order in which they were generated (but other heuristics generated in between are omitted due to space constraint).

\begin{tcolorbox}[title=Heuristics Generated throughout the Evolution Process (an Inconsecutive Excerpt), colback=gray!10, colframe=gray!50, coltitle=black]
{\footnotesize\ttfamily
→ return v\\
→ return v + pdf(v) - cdf(v)\\
→ return v + (cdf(v) / pdf(v)) - (1 / (1 + v))\\
→ return v - cdf(v) * pdf(v)\\
→ return v - (1 - cdf(v)) * v + (pdf(v) / (1 - cdf(v))) * v\\
→ return v - cdf(v) / pdf(v)\\
→ return v - (1 - cdf(v)) * v + pdf(v)\\
→ return v + (v - cdf(v) / pdf(v))\\
→ return v - cdf(v) / pdf(v) if v > 0 else -1\\
→ return v - pdf(v) * cdf(v) * 0.5\\
→ \textbf{return v - (1 - cdf(v)) / pdf(v)  [Successfully generate the virtual valuation function]}\\
→ return v - pdf(v) / cdf(v) + 1 / (1 - cdf(v))  [Continue exploring more options]\\
→ return v - v * (1 - cdf(v)) / pdf(v)\\
→ return v - (1 - v) * pdf(v) / (1 - cdf(v)) + v * (1 - v) * (1 - cdf(v)) / pdf(v)\\
→ ... [Explore more, then realize virtual valuation is the best and finally converge to it]
}
\end{tcolorbox}

The results indicate that GPT-3.5-Turbo does not have knowledge of the concept of virtual valuation or is unable to associate our problem directly with such knowledge. It starts the function searching process with the aim of maximizing the score (which is the seller's revenue in this setting), generating various functions, including different combinations of v, cdf(v), and pdf(v).

We also tried different distributions and different objectives, e.g., asking the AMD framework to generate a heuristic that simultaneously works well on multiple distributions. Details can be found in Appendix~\ref{apdx:rediscover}.

When we use more capable models such as DeepSeek-V3, DeepSeek-R1, 
Llama-3.1-70B-Instruct, and GPT-4o models in our AMD framework, they all output the virtual valuation function in one or two iterations. When we further ask them to explain their solutions, they will explain why Myerson's results can be applied to this setting. The following is an excerpt from the output. 
\begin{Verbatim}[breaklines=true,fontsize=\scriptsize,breaksymbol={}]
    The key insight from Myerson's theory is that the optimal auction allocates the item to the bidder with the highest virtual value, provided the virtual value is non-negative. ... This approach ensures that the mechanism is both incentive compatible and individually rational, as it adheres to the principles of Myerson's optimal auction. ...
\end{Verbatim}

Thus, in these cases, it seems that, unlike GPT-3.5-Turbo, the more capable models directly tied the problem to their prior knowledge about Myerson's auction; of course, the fact that they can do this would be of little help when trying to solve a previously unsolved mechanism design problem.

Another interesting finding worth noting is that when we remove the first sentence of the system prompt, that is ``You are a mechanism design expert.'', then GPT-4o loses its capability to associate the problem with  Myerson's existing results. When this sentence is removed, GPT-4o will not immediately output the virtual valuation. Instead, it will work similarly to GPT-3.5-Turbo, searching different functions and gradually evolving towards a better heuristic. 

All experiments presented in this section are without pre-set evolution strategies. We also tried different pre-set evolution strategies; details deferred to Appendix~\ref{apdx:rediscover}. 
More experiments and details for this section can also be found in Appendix~\ref{apdx:rediscover}. 

% \section{Application Examples}

\section{An Application to Social Welfare Maximization Problems}\label{sec:redistribution}

We apply our framework to welfare-maximizing mechanism design. As an example, we tackle the problem of finding optimal VCG redistribution mechanisms with multiple identical items. 

\subsection{Optimal VCG redistribution mechanism}
We consider the setting of allocating one or more items among several agents. 
For this setting, one well-known mechanism with desirable properties is the Vickrey-Clarke-Groves (VCG) mechanism, which is efficient, strategy-proof, individually rational, and does not incur a deficit~\cite{vickrey1961counterspeculation,clarke1971multipart,groves1973incentives}. However, it is not strongly budget balanced, i.e., it incurs a positive total payment from the agents in general.  When there is no party to receive this payment (such as a seller), then this payment is ``burned'' in the VCG mechanism. 
The aim of a VCG redistribution mechanism is to redistribute some portion of the collected VCG payments back to the agents, while maintaining the desirable properties of VCG mechanism.\footnote{Note that by ``the VCG mechanism'' we mean the Clarke mechanism; the resulting redistribution mechanism is generally still a Groves mechanism.} 
We require that agent $i$'s redistribution amount received be independent of her own bid to ensure strategy-proofness. In other words,
agent $i$'s redistribution takes the form of $r_i(b_{-i})$, where $r_i$ is referred to as the
redistribution function and $b_{-i}$ denotes the other agents' bids.
For anonymous settings, we omit the subscript and simply use $r(b_{-i})$ to denote the
redistribution amount. The function $r$ is what we aim to evolve using LLM.
Our objective is to get as close to budget balance as possible without ever incurring a deficit~\cite{cavallo2008efficiency,guo2011vcg,guo2016competitive,guo2007worst,guo2008better}. In this paper's setting, we aim to redistribute as much as we can {\em in expectation} (i.e., maximizing $\sum_ir(b_{-i})$ in expectation), while ensuring that, for all bid profiles, $\sum_ir(b_{-i})$ is never more than the total VCG payment.

The design criteria for this problem include strategy-proofness (SP), individual rationality (IR), feasibility, and weakly budget balance (WBB). The design objective is to maximize the expected total redistribution while satisfying these criteria. 

We consider the case where there are multiple bidders and multiple units of an item, and each bidder wants at most one unit of the item.

\subsection{Implementation and the Waterfilling Fix}

We designed specifications for the VCG redistribution mechanism setting described in the previous paragraph. 
We perform a fix to ensure that any mechanism generated by an LLM in our framework meets all design criteria after the fix. 
\begin{itemize}
    \item In our VCG redistribution mechanism,  the individual rationality criterion is satisfied by setting the amount redistributed back to each bidder to be nonnegative. 
    \item Strategy-proofness is guaranteed by limiting the redistribution function to depend only on the bids of other bidders, not on the bidder's own bid. This restriction guarantees that a bidder manipulating his or her own bid will not influence the redistribution value he or she received, eliminating incentives for strategic manipulation. 
    \item No item being allocated to multiple bidders is immediately enforced by using the VCG mechanism, which inherently ensures valid allocations.
    \item To ensure that the total payment (weakly) exceeds the total redistribution, we apply a ``waterfilling''-style post-processing fix. 
\end{itemize}

Code Specification~\ref{spec:vcg-redistri} shows the code specification for AMD-LLM workflow on the VCG redistribution mechanism design setting.
In this code snippet, the heuristic function represents the value of the redistribution to each bidder, which takes other bidders' bids (sorted) as input and outputs the redistribution value. This function will be evolved by our AMD workflow.

\begin{algorithm}
\caption{Waterfilling Fix for VCG Redistribution Mechanism}
\begin{algorithmic}[1]\label{code:waterfilling} 
\STATE \textbf{Input:} $redistri\_vec$: redistribution vector, $payment\_vec$: payment vector
\STATE \textbf{Output:} $fixed\_redistri\_vec$: fixed redistribution vector
\STATE \textbf{Procedure:}
\STATE Initialize $fixed\_redistri\_vec \leftarrow redistri\_vec$
\STATE Get the shape of $redistri\_vec$: $n$

\STATE Calculate overflow: $overflow \leftarrow - \sum(payment\_vec[i]) + \sum(redistri\_vec[i])$
\IF{$overflow < 0$}
    \STATE $fixed\_redistri\_vec \leftarrow redistri\_vec$

\ELSE
\STATE Sort $redistri\_vec[i]$ to get $sorted\_vals$ and $sorted\_indices$

\FOR{$j = 0$ to $n - 1$}
    \IF{$sorted\_vals[j] \cdot (n - j) > overflow$}
        \STATE Create $subtract\_tensor$ with size $sorted\_vals$
        \STATE Set $subtract\_tensor[j:] \leftarrow overflow / (n - j)$
        \STATE Update $sorted\_vals \leftarrow sorted\_vals - subtract\_tensor$
        \STATE \textbf{break}
    \ELSE
        \STATE $overflow \leftarrow overflow - sorted\_vals[j]$
        \STATE $sorted\_vals[j] \leftarrow 0$
    \ENDIF
\ENDFOR

\STATE Create $reconstructed\_x$ with the size of $sorted\_vals$
\STATE Map $sorted\_vals$ back to original order: $reconstructed\_x[sorted\_indices] \leftarrow sorted\_vals$
\STATE Update $fixed\_redistri\_vec[i] \leftarrow reconstructed\_x$
\ENDIF

\RETURN $fixed\_redistri\_vec$

\end{algorithmic}
\end{algorithm}

We perform a waterfilling fix for cases where the total redistribution value exceeds the total payment value. The idea is to distribute (i.e., deduct from the redistribution amount) the excessive amount among all agents evenly. If it is impossible to distribute evenly due to some bidders already getting too little redistribution, then take all the redistribution amount from these bidders and continue distributing the outstanding amount from the remaining bidders, again, evenly. Therefore, when the total redistribution is greater than the total payment, the waterfilling fix will always reduce the total redistribution to exactly match the total payment, so that weakly-budget-balanced (WBB) is satisfied.
See Algorithm~\ref{code:waterfilling} for more details.

Let $\text{wf}(b_i, b_{-i})$ be the {\em fix value} for bidder $i$'s redistribution value from the waterfilling fix process, which is the difference between the output of the heuristic function and the value returned by Algorithm~\ref{code:waterfilling}. We want our redistribution mechanism with fixed redistribution values to still be strategy-proof, so we require that the fix value be independent of bidder $i$'s value $b_i$. We thus propose a (further corrected) fix that reduces the redistribution value by the maximum of the fix values (as previously specified) over all possible $b_i$'s, i.e., 
\[\text{fix}_i(b_{-i})=\max_{\forall b_i'\in[0,1]} \text{wf}(b_i', b_{-i}).\]
Therefore, the fixed redistribution value equals the original redistribution value given by the heuristic function, minus this maximum fix value over all possible bids, i.e., 
\begin{align}
    \text{fixed\_redistri}_i(b_{-i})=\text{heuristic}(b_{-i})-\max_{\forall b_i'\in[0,1]} \text{wf}(b_i', b_{-i}). \label{eq:fixed-redistri}
\end{align}

\begin{proposition}  % that is individually rational (IR) and incentive compatible (IC)
    % For our redistribution mechanism with any LLM generated heuristic function, its fixed version is feasible, IR, IC, and WBB after applied the (further corrected) waterfilling fix, resulting in $fixed\_redistri$ being the redistribution value of the fixed mechanism. 
    After applying the (further corrected) waterfilling fix, the fixed version of our redistribution mechanism with any LLM-generated heuristic function becomes feasible, individually rational (IR), strategy-proof (SP), and weakly budget balanced (WBB). The resulting fixed mechanism has a redistribution value $fixed\_redistri$, defined in Eq.~(\ref{eq:fixed-redistri}). 
\end{proposition}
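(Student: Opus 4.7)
The plan is to verify each of the four criteria (feasibility, IR, SP, WBB) in turn for the mechanism whose redistribution values are given by Eq.~(\ref{eq:fixed-redistri}). Feasibility is the easiest: the allocation is performed by the Clarke/VCG procedure itself, which the framework never modifies, so no item is over-allocated and the efficient allocation is selected regardless of what the evolved heuristic does. Everything nontrivial therefore concerns the numerical redistribution values.

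For strategy-proofness, the key observation is that agent $i$'s fixed redistribution $\text{heuristic}(b_{-i})-\max_{b_i'\in[0,1]}\text{wf}(b_i',b_{-i})$ has no dependence on $b_i$ at all: the first term depends only on $b_{-i}$ by construction, and in the second term $b_i$ appears only as a bound variable of the maximization. Since the underlying VCG payments are themselves incentive compatible, adding an amount that depends only on $b_{-i}$ yields a Groves mechanism, hence remains strategy-proof. This is precisely what motivates the ``further corrected'' fix: the raw waterfilling output $\text{wf}(b_i,b_{-i})$ does depend on $b_i$, so using it directly would break SP, and replacing it by the pointwise supremum over $b_i'$ restores independence.

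For weak budget balance I would argue by a monotonicity chain. For any realized profile $(b_i,b_{-i})_i$, we have $\max_{b_i'}\text{wf}(b_i',b_{-i})\geq\text{wf}(b_i,b_{-i})$, so
\[\sum_i\text{fixed\_redistri}_i(b_{-i})\leq\sum_i\bigl(\text{heuristic}(b_{-i})-\text{wf}(b_i,b_{-i})\bigr).\]
The right-hand side is exactly the output of Algorithm~\ref{code:waterfilling} applied to the realized profile, and the waterfilling procedure is designed so that when the raw total exceeds the VCG total payment, the ``overflow'' is removed in its entirety, leaving the sum no greater than the total VCG payment. Hence WBB is preserved. For individual rationality I would show that the fixed redistribution to every bidder is nonnegative: Algorithm~\ref{code:waterfilling} only ever decreases individual redistribution values and clamps them at zero (the branch that sets $sorted\_vals[j]\leftarrow 0$), so $0\leq\text{heuristic}(b_{-i})-\text{wf}(b_i',b_{-i})$ for every $b_i'$, which gives $\text{wf}(b_i',b_{-i})\leq\text{heuristic}(b_{-i})$ for every $b_i'$ and thus $\max_{b_i'}\text{wf}(b_i',b_{-i})\leq\text{heuristic}(b_{-i})$, so Eq.~(\ref{eq:fixed-redistri}) is nonnegative. (If the framework admits heuristics whose raw output is negative, the same argument requires first clamping the heuristic at zero; I would note this as a standing convention.)

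The main obstacle is the interaction between the IR argument and the SP correction. Weakening $\text{wf}(b_i,b_{-i})$ to its supremum over $b_i'$ makes the deduction larger, so one has to be sure that the inequality $\max_{b_i'}\text{wf}(b_i',b_{-i})\leq\text{heuristic}(b_{-i})$ still holds \emph{uniformly} in $b_i'$ rather than merely at the realized $b_i$; this is the place where the nonnegativity clamp inside the waterfilling algorithm does the real work, and where a sloppy version of Algorithm~\ref{code:waterfilling} (one that allowed individual redistribution values to go negative in order to match the payment sum) would silently break IR. Once that uniform bound is in hand, the remaining steps are routine.
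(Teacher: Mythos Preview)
Your proposal is correct and follows essentially the same argument as the paper: feasibility from VCG, SP from the fact that Eq.~(\ref{eq:fixed-redistri}) depends only on $b_{-i}$, WBB from the inequality $\max_{b_i'}\text{wf}(b_i',b_{-i})\ge\text{wf}(b_i,b_{-i})$ combined with the guarantee of Algorithm~\ref{code:waterfilling}, and IR from the clamping-at-zero behavior of the waterfilling procedure. Your treatment is somewhat more explicit than the paper's (particularly in spelling out why the nonnegativity clamp inside Algorithm~\ref{code:waterfilling} is what makes the supremum over $b_i'$ harmless for IR), but the underlying logic is identical.
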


\begin{proof}
We consider the properties respectively. 
\begin{itemize}
    \item The (further corrected) fix ensures that the payment for bidder $i$ is independent of bidder $i$'s bid, and thus it maintains the strategy-proofness property of the original redistribution mechanism (the mechanism before the fix). 
    \item IR is preserved because in the waterfilling process, the maximum amount of the fix is just the redistribution value from the original mechanism, and thus this will not lead to a negative redistribution. 
    \item Feasibility is guaranteed by the VCG mechanism we use, and the waterfilling fix  will not change the allocation function. 
    \item WBB is met if the total redistribution after the fix is no larger than the total payment. We only need to consider the case where there is a positive fix. In such a case, 
    \[\sum_{i}\text{wf}(b_i, b_{-i})\leq \sum_{i}\max_{\forall b_i'\in[0,1]} \text{wf}(b_i', b_{-i})=\sum_{i}\big(\text{heuristic}(b_{-i}) - \text{fixed\_redistri}_i(b_{-i})\big). \]
    Combined with the facts that Total Redistribution before Fix - Total Payment $=\sum_{i}\text{wf}(b_i, b_{-i})$, and $\text{Total Redistribution before Fix} = \sum_{i}\text{heuristic}(b_{-i})$, we can conclude that 
    \[\sum_{i}\text{fixed\_redistri}_i(b_{-i})\leq \text{Total Payment}. \]
\end{itemize}
\end{proof}
% \vspace{-0.6cm}

In Code Specification~\ref{spec:vcg-redistri}, the main function is used to evaluate the performance of the currently selected heuristic, whose return value is used as the fitness in the evolutionary process. 
In the implementation, we sample a batch of bids (we set batch\_size = 3,000 for the experiments in this section) from the valuation distribution to diminish the influence of randomness on our AMD evolution workflow. 
Lines 7-9 call the predefined vcg\_payment function to calculate the VCG payment values, detailed in Appendix~\ref{apdx:redistribution}. Lines 10-12 calculate the redistribution value using the heuristic function generated and evolved in our AMD-LLM workflow. Lines 13-14 perform the waterfilling fix that turns the mechanism based on the LLM-generated heuristic (possibly not WBB) into a mechanism that satisfies all the design criteria. 
More details are included in Appendix~\ref{apdx:redistribution}. 

\begin{lstlisting}[basicstyle=\ttfamily\scriptsize,mathescape=true,caption={Specification for VCG Redistribution Mechanism Design},label={spec:vcg-redistri}]
import torch
import funsearch
@funsearch.run    # Evaluation function to be executed by FunSearch
def main() -> float:
  batch_size, num_items, num_bidders = 3000, 2, 4
  batched_bids = get_bidders_values(batch_size, num_bidders)
  batched_payment = vcg_payment(batched_bids, num_items) #Calcualte VCG payment
  batched_allocation = (batched_bids > batched_payment).float()
  batched_payment = batched_payment * batched_allocation
  batched_redistri = torch.zeros_like(batched_bids)
  for batch_id in range(batch_size):
    batched_redistri[batch_id] = redistribution_func(batched_bids[batch_id], num_items, num_bidders)  #redistribution_func will call the heuristic
  delta_redistri = fix_waterfilling(batched_bids, num_items, num_bidders)
  batched_redistri_fixed = batched_redistri - delta_redistri
  score = torch.mean(torch.sum(batched_redistri_fixed, dim=1)).item()
  return score
  
def redistribution_func(bids, num_items, num_bidders):
  redistri_vec = torch.zeros_like(bids)
  for bidder in range(num_bidders):
    others_bids = torch.concat((bids[:bidder], bids[bidder+1:]))
    heu_out = heuristic(torch.sort(others_bids)[0])
    redistri_vec[bidder] = heu_out
  return redistri_vec
  
@funsearch.evolve     # The heuristic function to be evolved through FunSearch
def heuristic(bids): 
  return 0    # Start with a naive heuristic
\end{lstlisting}

We also experiment with the  NNA-AMD-LLM workflow. 
We train a RegretNet~\cite{RochetNet} to function the same as the heuristic function in the workflow. The input to the neural network consists of the other bidders' bids (sorted) and the output is the redistribution value. 
The unsupervised training objective is to maximize the total redistribution while penalizing on feasibility, SP, IR, and WBB violations. 
Thus, we define the loss function to be minimized during training to be
% \vspace{-0.5cm}
\begin{align}
    Loss = -\text{score}+\alpha \cdot \sum_i\text{Regret(i)}+\beta\cdot f\text{(violations)} \label{eq:loss-redistri}
\end{align}
% \vspace{-0.4cm}
where the score is the total redistribution value in this case. Coefficients $\alpha, \beta$ are adjusted during training. Regret($i$) is the ex post expected regret, which measures how much the mechanism violates SP:
$\text{Regret(i)} = \mathbb{E}\big[\max_{b_i'} u_i(b_i',b_{-i})-u_i(b_i, b_{-i})\big]$,
where $u_i$ is bidder $i$'s utility function, i.e., bidder $i$'s valuation of the item minus payment if allocated an item, plus the received redistribution value. 
We also perform the same waterfilling fix to the neural network output in case it is not WBB. 

We then use such a neural network in NNA-AMD-LLM. In this Program-by-Example workflow, we want the LLM to generate a heuristic that approximates the NN's pattern. 
The code specification we use is almost the same as the one used in AMD-LLM, except that the evaluation score is replaced by directly calculating the difference between the heuristic output and the NN output on the samples.  

The system prompt we use in our framework is shown in Prompt~\ref{prompt:redistribution}, in which we include the problem setting and some requirements for designing the mechanism. 
We use the same prompt for our AMD workflows with and without neural network assistance.

\begin{mybox}[label=prompt:redistribution]{Prompt \thetcbcounter. The System Prompt for VCG Redistribution Mechanism Design}
% \begin{tcolorbox}[title={The System Prompt for VCG Redistribution Mechanism Design}, colback=gray!10, colframe=gray!50, coltitle=black]\label{prompt:redistribution}  % \fontseries{sb}\selectfont 
% \textbf{example code} \label{code:example}
\begin{Verbatim}[breaklines=true,fontsize=\scriptsize,breaksymbol={}]
You are a mechanism design expert. Your task is to design a heuristic function for a redistribution mechanism. The heuristic function's input is only others_bids (a 1-dimension np.array of other bidders' bids, except a certain bidder's), and the output is a float number representing the amount to allocate back to the certain bidder. The goal of a VCG redistribution mechanism is to allocate resources efficiently, ensuring that participants have incentives to bid truthfully (incentive compatibility) and are not worse off by participating (individual rationality). Additionally, it aims to redistribute part of the payment surplus back to participants, minimizing the revenue retained by the auctioneer or central authority. This redistribution enhances fairness and can increase participant satisfaction while maintaining the efficiency and truthfulness properties of the VCG mechanism. In this particular setting, we are targetting 2 items and 4 bidders. The input to the heuristic function is only others_bids (the other bidders' bids), which is a array of length 3. There are no other args as input, do not use other parameters without defining. If you want to use any new parameters in addition to others_bids, you must initialize that explicitly in the code of heuristic function. You should remove redundant code (i.e., if a shorter version of code has equivalent function, please use the shorter one). You only need to design one new heuristic function. Note: use 2 spaces as indent for Python code. Only output a standalone heuristic_v{version} function code, do not output anything else. You can use various functions that commonly occurs in game theory and mechanism design literature. You should make the code concise and short. Please specify any parameter not defined before that is used in the heuristic function. Please use np for mathematics computation.  
\end{Verbatim}
% \end{tcolorbox}
\end{mybox}

\subsection{Experimental Results}\label{sec:redistribution-experiment}
In our experiment, we set the number of bidders to be 4 and the number of (identical) units of the item to be 2. All bidders' valuations are sampled i.i.d.\ from the uniform distribution $U[0, 1]$. 
The experiments in this section use the DeepSeek-V3 model. 
We initialize the heuristic to na\"ively returning 0. 

\subsubsection{Comparison of Methods}

There are existing manual VCG redistribution mechanisms, such as 
\cite{cavallo2006optimal,guo2008better}. These two works each give the same mechanism for the setting we consider here, which is
$\text{heuristic}(b_{-k}) = 0.5 \min(b_{-k})$.
Under the distribution we use, this heuristic achieves an expected total redistribution value of 0.5. 

See Table~\ref{table:redistri-scores} for a comparison of the optimal mechanisms found by different methods. The score is from an average of 3,000 test samples drawn from the distribution. 
From the results, we can see that our AMD-LLM workflow can find a mechanism with better performance than the existing manual mechanisms.
With the assistance of NN, our AMD workflow can identify better mechanisms compared to those found without NN. However, there remains a gap between the results from NNA-AMD-LLM and the results produced directly by the NN. 
Nevertheless, we emphasize that while the results produced directly by neural networks (NN) are black-box models, our solutions, which are concise Python code, offer the advantage of greater interpretability, as discussed in detail below.

% This suggests that while the LLM can grasp certain characteristics of the NN through the PBE process, it still falls short of fully capturing the NN's complete behavior.
% One potential direction for improvement is to experiment with a smaller NN as the assistant. A smaller NN, with its simpler structure, may be easier for the PBE process to approximate, potentially bridging the gap in understanding.

\vspace{-0.3cm}
\begin{table}[htbp]
\centering
\small
\caption{Comparison of Average Test Scores (Expected Total Redistribution) of the Optimal Mechanisms Found by Different Methods}\label{table:redistri-scores}
% \label{tab:results}
\begin{tabular}{lc}
\toprule
\textbf{Method} & \textbf{Avg Test Score} \\
\midrule
\cite{cavallo2006optimal,guo2008better} & 0.4935 \\
(ours) AMD-LLM & 0.5838 \\
(ours) NNA-AMD-LLM & 0.5917 \\
RegretNet~\cite{RochetNet} & 0.6254 \\
\bottomrule
\end{tabular}
\end{table}
\vspace{-0.4cm}

\subsubsection{Evolution dynamics and examples of generated heuristics}

Fig.~\ref{fig:redistri-pbe-1} shows the result from NNA-AMD-LLM. The $x$-axis gives the evolution iteration, and the $y$-axis gives the score. The score is the negation of the average squared difference between the heuristic's output and the NN's output on the same set of samples. A larger score (smaller absolute value of the score) means the heuristic better approximates the NN. 
The blue curve represents the highest average score achieved from iteration 0 to the most recent iteration through the evolutionary process, using NNA-AMD-LLM in the context of VCG redistribution mechanism design.
The red curve in Fig.~\ref{fig:redistri-pbe-1} depicts the best score within the last five iterations, where five corresponds to the number of predefined evolution strategies we employ. These strategies are applied sequentially in a repeating cycle, meaning each consecutive set of five iterations is generated by five distinct evolution strategies.
The red curve oscillates and temporarily decreases as the process explores different heuristics. % before ultimately converging to the optimal rules.
The blue boxes highlight sample heuristics generated at specific stages of the evolutionary process, as indicated by the blue arrows. Some blue arrows are not pointing at the blue curve, which means that the heuristic is not the best so far; for these heuristics, the arrows are pointing to the corresponding iteration but not necessarily the value of its score.  

From this result, we can observe that our method has the following capabilities. 
\begin{enumerate}
    \item It can experiment with existing methods by updating the parameters. See Fig.~\ref{fig:redistri-pbe-1} as an example, and consider the sampled outputs at iterations 268 and 348 (as well as other outputs not shown in the figure). During the exploration process, the parameters in the heuristics are adjusted. Specifically, the coefficient for harmonic\_mean is updated from 0.4 (at iteration 268) to 0.3 (at iteration 348). 
    \item It can make slight adjustments to the content in heuristics. For instance, the min\_bid parameter (at iteration 268) is replaced with median\_bid (at iteration 348) to facilitate exploration. 
    \item In addition to refining and building upon previously successful mechanisms, the approach is also capable of generating entirely novel ideas. As illustrated by the sample heuristics from iterations 0, 65, 268, and 977, the evolutionary process proposes heuristics with diverse and distinct structures.
    \item It can identify elegant heuristics that have been demonstrated to be effective in the prior literature. Iteration 65 successfully discovers the redistribution mechanism proposed by~\cite{cavallo2006optimal,guo2008better}. 
\end{enumerate}

% \vspace{-1cm}
\begin{figure}[htbp]
    \centering
    \includegraphics[width=\textwidth]{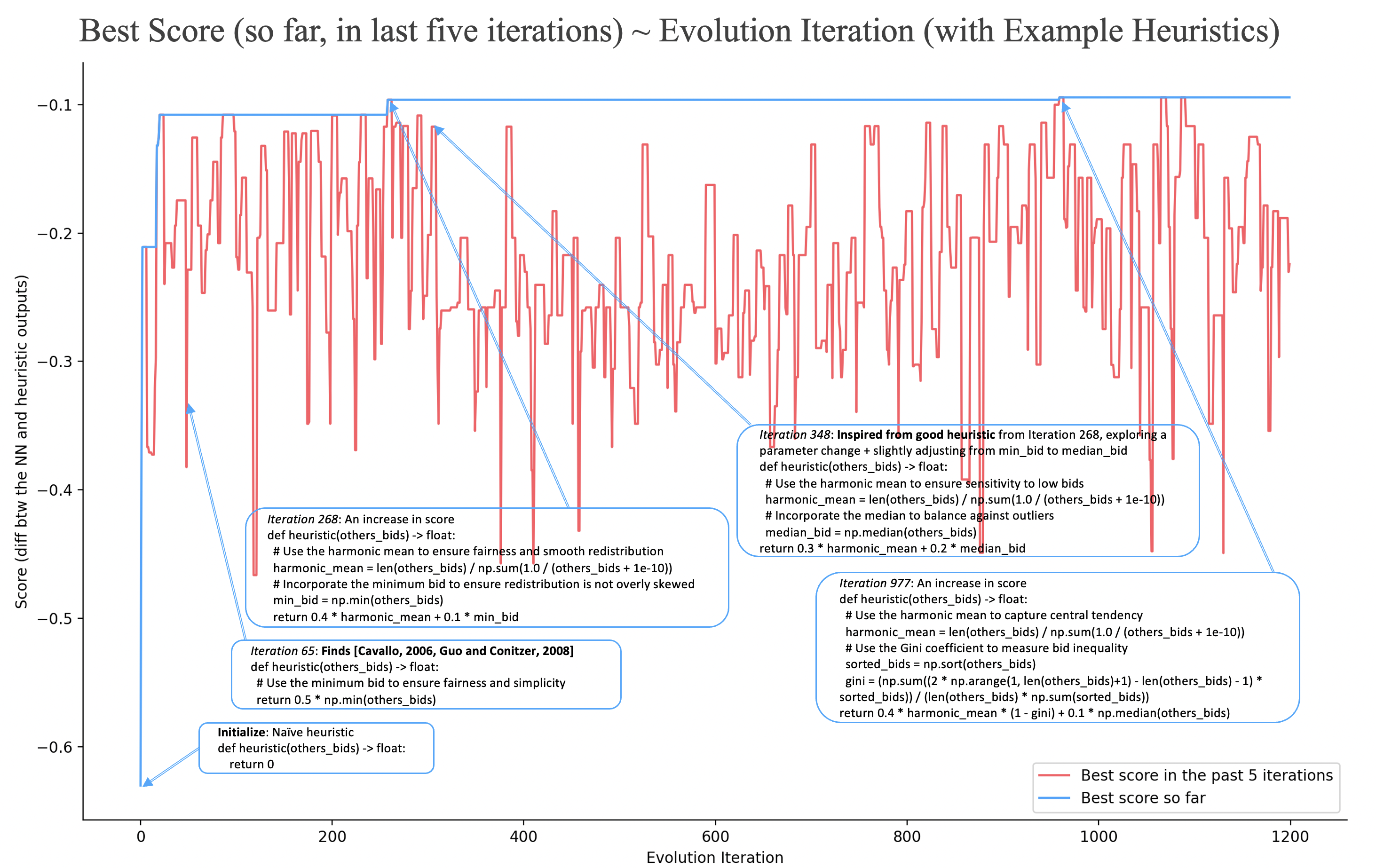}
    \caption{The best average score (so far, in last five iterations) by evolution iteration, using NNA-AMD-LLM, on VCG redistribution mechanism design. The blue boxes contain some example heuristics generated, from the corresponding positions in the evolutionary process, pointed at by the blue arrows.}
    \label{fig:redistri-pbe-1}
\end{figure}
% \vspace{-1cm}

The results of AMD-LLM have evolution patterns similar to those of NNA-AMD-LLM as presented in this section, and therefore we include the details of those results in Appendix~\ref{apdx:redistribution}.

\section{An Application to Revenue Maximization Problems}\label{sec:correlated}

We now move on to another application: maximizing revenue, specifically in the context of correlated bidders.

\subsection{Single-item auction with multiple correlated bidders}
In this auction, multiple correlated buyers compete for a single item. This setting is more complex than the independent private values model and can lead to different optimal auction designs. 

We assume that bidders' valuations are drawn from a joint probability distribution. That is, the bidders' valuations are correlated. 
%The value one bidder assigns to the item may provide information about other bidders' valuations, where such correlation can arise from common factors affecting all bidders or interdependencies between bidders.
The optimal auction design can be significantly different from the independent case, because the seller may be able to extract more revenue by leveraging the correlation information. As a result, standard auction formats like second-price auctions are no longer optimal in general~\cite{dughmi2014sampling,morgenstern2016learning,dobzinski2015approximately}. In fact, \citet{papadimitriou2011optimal} showed that optimal
single-item auction design under correlation is NP-hard to approximate.

\subsection{Implementation and the Monotonicity Fix}
We experiment in the revenue-maximizing auction setting with 1 item and 2 bidders, where the two bidders' values are correlated. 
The heuristic function's input consists of the bidders' bids (a vector of length 2). Its output is the allocation function, which is a vector of length 3, where each element should be in range [0, 1]: if the first index of the output vector is the largest, then allocate the item to the first bidder; if the second index is the largest, then allocate to the second bidder; if the third index is the largest, then do not allocate to any bidder. The goal of the mechanism designer is to maximize the revenue while ensuring feasibility, strategy-proofness, and IR. 
We use the same pre-set evolution strategies as in Sec.~\ref{sec:redistribution-experiment}. We use a grid-like correlated distribution in the experiments, detailed in Appendix~\ref{apdx:correlated}. The experiments in this section use the DeepSeek-V3 model. 

For AMD-LLM, the score used in the evolutionary process is the average total payment. The specification code we use is shown in Code Specification~\ref{spec:example-one-item}.
For NNA-AMD-LLM, the score used in the evolutionary process is the average L2 difference between the output of the heuristic function and the output of the trained RegretNet.   

\textit{The fixing process. } 
In this section, we use the monotonicity fix along with the corresponding payment function derived from the allocation heuristic explained in Example~\ref{example:auction-spec}. This fix ensures that the allocation function is monotone in each bidder's valuation, and thus ensures SP. 

\subsection{Experimental Results}

In the correlated-bidders auction setting, the experiments demonstrate that our frameworks achieve competitive performance while providing interpretable solutions. The AMD-LLM frameworks, both with and without NN assistance, perform on par with Myerson's mechanism with ironing, a well-established and fairly intricate mechanism (noting that ironing involves finding the convex hull of the revenue-demand curve); see Table~\ref{table:correlated-scores}. 
Notably, while neural-network-based methods (RegretNet) achieve slightly higher performance, they lack interpretability. In contrast, our frameworks generate mechanisms that are both interpretable and near-optimal, enabling a deeper understanding of the underlying logic and making them more practical for real-world applications. This balance of performance and transparency highlights the strength of our approaches in delivering efficient and trustworthy solutions for mechanism design.
The evolution dynamics is similar to the one in Fig.~\ref{fig:redistri-pbe-1}. Details are included in Appendix~\ref{apdx:correlated}.

\begin{table}[htbp]
\centering  % \jiayuan{remove this table? or move it to the Appendix? }
\small
\caption{Comparison of Average Test Scores (Expected Total Payment) of the Optimal Mechanisms Found by Different Methods}\label{table:correlated-scores}
% \label{tab:results}
\begin{tabular}{lc}
\toprule
\textbf{Method} & \textbf{Avg Test Score} \\
\midrule
% Reserved Price (0.5) Auction & 0.3856 \\
Myerson with Ironing~\cite{Myerson81:Optimal} & 0.3857 \\
(ours) AMD-LLM & 0.3857 \\
(ours) NNA-AMD-LLM & 0.3857 \\
RegretNet~\cite{RochetNet} & 0.3878 \\
\bottomrule
\end{tabular}
\end{table}
% \vspace{-0.7cm}

\paragraph{Insights given by LLM-generated heuristics during evolution}
The following is one LLM-generated heuristic at iteration 95 in the experiment using NNA-AMD-LLM. 
It shows that the LLM has learned that the mechanism may choose not to allocate in certain cases when doing so is unprofitable for the seller. 
The allocation function derived from the following LLM generated heuristic (with sigmoid function) is effectively equivalent to the reserve-price auction with a reserve price of 0.5. However, this LLM generated heuristic outperforms the corresponding reserve-price mechanism in our framework, because of the additional fixing process. The monotonicity fix derives payments from the allocation function for both the 0.5-reserve-price mechanism and the LLM generated sigmoid-involved mechanism. Although the two mechanisms has the same allocation function, the payment derived are different, and thus, the probability of selling an item is also different (an item is sold if and only if the derived payment from the monotonicity fix is no greater than the bidder's valuation for the item).

{\footnotesize
\begin{verbatim}
def heuristic(bids):
  threshold = 0.5
  sigmoid = lambda x: 1 / (1 + np.exp(-10 * (x - threshold)))
  alloc_bidder1 = sigmoid(bids[0])
  alloc_bidder2 = sigmoid(bids[1])
  no_alloc = 1 - max(alloc_bidder1, alloc_bidder2)
  return [alloc_bidder1, alloc_bidder2, no_alloc]
\end{verbatim}
}

One interesting observation is that, in spite of the mechanism design domain, LLM-generated heuristics often include functions whose usage is common in AI, such as sigmoid, KL-divergence, and convolution. While a combination of these functions can perform well, it may not be the optimal choice. One hypothesis is that an LLM trained less on these functions might yield better performance.
See Appendix~\ref{apdx:correlated} for more experimental results.

\section{Conclusion and Future Work}\label{sec:conclusion}

In this paper, we introduced a novel framework that reformulates mechanism design as a code generation task, leveraging LLMs to automate mechanism design. By generating and evolving heuristic mechanisms described in code, with the additional problem-specific fixing process, our approach bridges the gap between traditional analytical methods and modern automated techniques. This framework ensures that the generated mechanisms are interpretable, and adhere to key design criteria such as feasibility and strategy-proofness.
Our experiments demonstrated that LLM-generated mechanisms achieve competitive performance while offering greater interpretability compared to existing neural-network-based approaches. Notably, our framework rediscovered well-known mechanisms, such as Myerson's optimal auction, and provided insights into NN-based solutions like RegretNet. This highlights the potential of LLMs to automate mechanism design while enhancing the transparency and interpretability of the generated mechanisms.  The effectiveness of the methodology is likely to increase as LLMs and associated techniques improve, thereby also being able to scale to societal-scale problems that are inaccessible to traditional mechanism design.  Meanwhile, unlike alternative neural-network-based approaches, the mechanisms produced are more interpretable and thereby easier to analyze, allowing us to ensure that their deployment is safe.

There are several avenues for future research. 
As mentioned in the experimental sections, LLM-generated heuristics often rely heavily on functions that are commonly used in AI, which may not be ideal for mechanism design. Future work could explore using an LLM less trained on AI-style code, or fine-tuning it on mechanism design and economic theory to better align with our framework.

LLM-generated heuristics provide valuable insights, which could also be leveraged to refine system prompts automatically. Recent works like~\cite{fernando2023promptbreeder} focus on improving prompts, and integrating such methods into our AMD framework could enhance its effectiveness.

Incorporating human feedback into the evolutionary process can yield more intuitive and practical mechanisms, better suited to real-world constraints. Additionally, combining LLMs or neural networks with various symbolic reasoning techniques could further improve interpretability and robustness.

\section{Acknowledgment}

We thank Thomas Kleine Büning, Edward Hughes, Sai Srivatsa Ravindranath, and Emanuel Tewolde for valuable discussions. We also thank Yee Man Choi and Fei Fang for their contributions to discussions and experiments related to LLM code generation. 
Additionally, we are grateful to the Cooperative AI Foundation, Polaris Ventures (formerly the Center for Emerging Risk Research) and Jaan Tallinn’s donor-advised fund at Founders Pledge for financial support.

% Bibliography
\bibliographystyle{ACM-Reference-Format}
\bibliography{my-bibliography}

% Appendix
\appendix

\section{More Experiments and Implementation Details on Rediscovering Virtual Valuation}\label{apdx:rediscover}

\subsection{Hyperparameters for the Evolutionary Algorithm}  
The evolutionary algorithm uses islands—independent subpopulations of solutions—to maintain diversity and explore different regions of the solution space. Each island evolves separately through selection, crossover, and mutation, with better-performing solutions more likely to reproduce.

The main hyperparameters include:
\begin{itemize}
    \item num\_islands: how many islands to maintain, i.e., the number of subpopulation. An island refers to a subpopulation of solutions that evolve independently. More islands help maintain higher population diversity, but will converge slower. 
   \item reset\_period: how often the weaker islands should be reset. The evolutionary algorithm periodically reinitializes weaker islands to prevent stagnation. 
    \item cluster\_sampling\_temperature: controls the exploration-exploitation trade-off when selecting clusters within an island. Higher temperatures encourage more exploration, while lower temperatures favor exploitation of better-performing clusters. This parameter will decay gradually, shifting the sampling behavior from exploration to exploitation over time.
    \item functions\_per\_prompt: how many samples of previous generated programs to include in prompts. 
\end{itemize}
For different problems, we set different hyperparameters for the evolutionary updates. 

In our experiments on rediscovering virtual valuation, we set number of islands to be 10, reset period to be an hour. The cluster sampling temperature is initialized to be 0.1 and will decay linearly, more specifically, will decay following 0.1(1 $-$ \#programs / 30000). 

\subsection{Simpler Setting with only 1 bidder}
In this case, the evolution process can find the following heuristic faster than finding the original virtual valuation function (with division by pdf(v)). 
{\small
\begin{verbatim}
def heuristic(v, alpha, beta): 
  return v * pdf(v) - cdf(v) + 1
\end{verbatim}
}
This heuristic is actually equivalent to virtual valuation function in one-bidder case, because we are comparing the virtual valuation with 0 to decide whether or not to allocate. 
Whether it is greater than zero remains the same if we divide this heuristic by pdf(v), which is then exactly the virtual valuation. 

However, this only applies to the one-bidder case, so it is not as generalized as the original virtual valuation function that can also be found through our AMD framework when we set the number of bidders to be at least 2.

\subsection{Combining Different Distributions in Evaluation}
In some cases, it is possible that the framework finds virtual valuation and then diverts to another heuristic with a slightly higher score. This is because we are evaluating the performance on 3,000 random samples from the distribution, and when the distribution is nontrivial and there are more bidders, the variance can be large. As a result, it may ``overfit'' to a heuristic that is suboptimal in general. The following shows one such overfitted heuristic, which has the best score but is not optimal in theory.  
{\small
\begin{verbatim}
def heuristic(v, alpha, beta): 
  return v ** 2 * pdf(v) / (1 - cdf(v)) - abs(v - 0.5) + v - 0.5 
\end{verbatim}
}

We then tried different distributions and different objectives. 
We still use the GPT-3.5-Turbo model in our experiment.
The distribution we tried includes the uniform distribution $U[0,1]$, the Beta distributions Beta(2, 5), Beta(0.5, 0.5), and Beta(2, 2). The results are similar to those presented in Sec.~\ref{sec:rediscover} under all these distributions. 
That is, it is possible to find the optimal auction rule, while it also is possible that our framework overfits to the distribution and generates a heuristic that works well in that specific distribution we use. 

To address this issue, we further tried asking the AMD framework to generate a heuristic that works well simultaneously on multiple distributions. 
In the evaluation function, we draw samples from different distributions, e.g., 3000 samples from each of the Beta(2, 5), Beta(0.5, 0.5), and Beta(2, 2) distributions, and then calculate the average score for each distribution. We aim to optimize the three scores simultaneously. 
However, if we fix the rule that combines these three scores, then we are still optimizing over some certain metric. It is similar to optimizing over one certain distribution, and the overfit issue still persists. 

\subsection{Pre-set Evolution Strategies}
The experiments in Sec.~\ref{sec:rediscover} are using our framework without pre-set evolution strategies. 
We also tried using problem-specific evolution strategies. The five evolution strategies used are as follows. Some of them are also used by~\citet{liu2024evolution}. 
\begin{enumerate}
\ttfamily\small
    \item Please help me create a new heuristic that has a different arithmetic operator or a different parameter from the given heuristics. 
    \item Please help me create a new heuristic that has a totally different form from the given ones but can be motivated from them. 
    \item Please assist me in creating a new heuristic that has a different form (ideally a shorter form) but can be a modified version of the heuristic provided.
    \item Please help me create a new heuristic by using a different arithmetic operator, or switching the two operands from the given heuristic.
    \item First, you need to identify the main components in the heuristic function below. Next, analyze whether any of these components can be overfitted to the in-distribution instances. Then, based on your analysis, simplify the components to enhance the generalization to potential out-of-distribution instances. 
\end{enumerate}
\normalfont

In each iteration, one of these strategies is prompted to LLM together with the system prompt and the sampled previously generated heuristic functions. With the help of these heuristics, it is easier for our AMD framework to find the virtual valuation function. 

However, these five evolution strategies are specially designed to our problem of finding the existing virtual valuation function. This is to some extent too deliberate, because we already know the objective beforehand. 

For NNA-AMD-LLM, we tried using the LLM to generate a heuristic function that approximates the fixed RegretNet, as well as the unfixed RegretNet result, respectively. The performance is similar in both experiments. This could be because the LLM generated mechanism is still far from the NN ones, so the waterfilling process fixes by a large extent even if the framework evolves towards the fixed NN. 

\subsection{Other System Prompts That Help}
Meanwhile, we tried other system prompts that are specially designed to fit this problem of rediscovering virtual valuation. By giving some extra hints to the LLMs, these prompts can sometimes help to rediscover the virtual valuation faster. 

\begin{enumerate}
\ttfamily\small
    \item You can use various functions that commonly occurs in game theory and mechanism design literature. 
    \item The value v should serve as an additive term in the heuristic, i.e., the heuristic function should start with v $-$ or v $+$. 
    \item You can also use some other terms, e.g., (1 - cdf(x)) is the complementary cumulative distribution function (CCDF) or the survival function of a random variable. 
    \item (We first predefine survival(v) function in the code) You can use the survival function of the distribution survival(v). 
    \item Since cdf(v) and survival(v) are complementary, you should use one of these two, but not both, in the heuristic. 
\end{enumerate}
\normalfont

\section{Experimental Details for Designing VCG Redistributon Mechanism}\label{apdx:redistribution}

\subsection{Implementation of the VCG Payment}
Before we apply the VCG redistribution mechanism, the payment and allocation are decided by the VCG mechanism. 
VCG mechanism awards goods or resources to the bidders who value them the most (maximizing social welfare) and charges each winner an amount equal to the externality they impose on others -- specifically, the difference in social welfare if they were not participating. See Code Specification~\ref{code:vcg-implementation} for the code implementation in Python, which we use together with the code in Code Specification~\ref{spec:vcg-redistri} to build our AMD with LLM workflow for designing the VCG redistribution mechanism. 

\begin{lstlisting}[mathescape=true,caption={Implementation of VCG Mechanism in Python Code},label={code:vcg-implementation}]
def vcg_payment(bids_vec, num_items):
  top_values, top_indices = torch.topk(bids_vec, num_items, dim=1)
  sorted_bids, _ = torch.sort(bids_vec, descending=True, dim=1)
  threshold_value = sorted_bids[:, num_items].unsqueeze(1)
  payment_outputs = torch.full_like(bids_vec, 9999)
  payment_outputs.scatter_(1, top_indices, threshold_value.expand_as(top_indices))
  return payment_outputs
\end{lstlisting}

\subsection{Implementation of the Waterfilling Fix}
The waterfilling fix is implemented according to Algorithm~\ref{code:waterfilling}. See the supplementary code materials for its detailed implementation. 
We use the functions for the waterfilling fix together with the code in Code Specification~\ref{spec:vcg-redistri} and Code Specification~\ref{code:vcg-implementation} to build our AMD-LLM workflow for designing the VCG redistribution mechanism. 

\subsection{More Details on Implementation}
In our experiments on rediscovering virtual valuation, we set number of islands to be 10, reset period to be an hour. The cluster sampling temperature is initialized to be 0.1 and will decay linearly, more specifically, will decay following 0.1(1 $-$ \#programs / 30000). The same hyperparameters are used for correlated bidders auction setting. 

Since the evaluation in each iteration is based on samples and we need to perform an evaluation for each generated heuristic, the evaluation takes long if the number of samples is large. Thus, to balance the response time of LLM and the evaluation time, we set the number of samples to be 3,000.  
In order to reduce the impact of variance on the performance of our method, we first fix the set of parameters during the evolution process and then test using random samples. 

The NN we use is a multilayer perceptron with 2 layers, each with 100 nodes. We use ReLU activation. It is trained for 200,000 epochs. The trained model is WBB on 96.5\% of the in total 3,000 test samples. 

\subsection{Pre-set Evolution Strategies}
The experiments in Sec.~\ref{sec:redistribution} are using our framework with pre-set evolution strategies. 
The five evolution strategies used are as follows, which explicitly ask the evolutionary process to follow the principles of natural selection, mutation, and crossover. They are slightly altered from those used by~\citet{liu2024evolution}. 
\begin{enumerate}
    \item Please help me create a new heuristic that has a totally different form from the given ones. 
    \item Please help me create a new heuristic that has a totally different form from the given ones but can be motivated from them.
    \item Please assist me in creating a new heuristic that has a different form (ideally a shorter form) but can be a modified version of the heuristic provided. 
    \item Please identify the key parameters in the heuristic function and assist me in creating a new heuristic function that has different parameter settings of the key parameters.
    \item First, you need to identify the main components in the heuristic function below. Next, analyze whether any of these components can be overfitted to the in-distribution instances. Then, based on your analysis, simplify the components to enhance the generalization to potential out-of-distribution instances.
\end{enumerate}
In each iteration, one of these strategies is prompted to LLM together with the system prompt and the sampled heuristics from the database that stores all previously generated heuristics. 

Unlike what we did in rediscovering virtual valuation, in this setting, we cannot design those deliberate evolution strategies because we do not know what the optimal heuristic looks like for this open problem. As a result, we can only use these general strategies to facilitate the evolution process.

\subsection{A Further Improvement: the Reverse-Waterfilling Fix}
After the waterfilling fix, the total redistribution value will likely be less than the total payment value. 
While our aim is to maximize the social welfare, i.e., to redistribute back as much as possible, it is natural to think of a way to redistribute back the updated surplus (the surplus after fix). 
In order to still preserve SP, IR, and feasibility, we can still use a waterfilling-style method (and then similarly take the max over all possible bidding values for each bidder) to evenly redistribute more back to bidders. 
In the setting we experiment on, the reverse-waterfilling gives a non-zero but marginal improvement. 
See Algorithm~\ref{code:reverse-waterfilling} and supplementary code materials for details. 

\begin{algorithm}
\caption{Reverse-Waterfilling for VCG Redistribution Mechanism} 
\begin{algorithmic}[1]\label{code:reverse-waterfilling} 
\STATE \textbf{Input:} $fixed\_redistri\_vec$: redistribution vector after the waterfilling fix, $payment\_vec$: payment vector
\STATE \textbf{Output:} $reverse\_fixed\_redistri\_vec$: redistribution vector after the reverse-waterfilling process
\STATE \textbf{Procedure:}
\STATE Initialize $reverse\_fixed\_redistri\_vec \leftarrow fixed\_redistri\_vec$
\STATE Get the shape of $fixed\_redistri\_vec$: $n$

\STATE Calculate surplus: $surplus \leftarrow \sum(payment\_vec[i])  \sum(fixed\_redistri\_vec[i])$
\IF{$surplus < 0$}
    \STATE $reverse\_fixed\_redistri\_vec \leftarrow fixed\_redistri\_vec$
\ELSE
\FOR{$j = 0$ to $n - 1$}
    \STATE $reverse\_fixed\_redistri\_vec \leftarrow fixed\_redistri\_vec + surplus / n$
\ENDFOR
\ENDIF

\RETURN $reverse\_fixed\_redistri\_vec$

\end{algorithmic}
\end{algorithm}

Note that we do not necessarily have to check whether or not $surplus \geq 0$ here in the reverse-waterfilling process, like what we did in the waterfilling fix, because the reverse-waterfilling process is performed after the waterfilling fix that already ensures $surplus \geq 0$. 

For each bidder $i$, we still need to take max over all his or her possible bids to make the additional fix value independent of bidder $i$'s own bid, so that SP can be guaranteed. 
\[\text{reverse\_fixed\_redistri}_i(b_{-i})=\text{heuristic}(b_{-i})-\max_{\forall b_i'\in[0,1]} \text{wf}(b_i', b_{-i}) + \max_{\forall b_i''\in[0,1]}\text{reverse\_wf}(b_i'', b_{-i})\]
where $\text{reverse\_wf}(b_i, b_{-i})$ is the amount of update for bidder $i$' fixed redistribution value in the reverse-waterfilling process, i.e., the difference between the output of Algorithm~\ref{code:reverse-waterfilling} and the corresponding output of Algorithm~\ref{code:waterfilling}. 

\subsection{Additional Experiment Results}

\subsubsection{Results on AMD-LLM}
Fig.~\ref{fig:redistri-1} shows the result from AMD-LLM. The $x$ axis gives the evolution iteration, and the $y$ axis gives the score. The score is the average total redistribution value evaluated on the 3,000 samples drawn from the distribution $U[0,1]$. 

Fig.~\ref{fig:redistri-1} shows the evolutionary dynamics in the experiment: best average score (so far, in last five iterations) with the evolution iteration, using AMD-LLM, on VCG redistribution mechanism design. 
The results are similar to the ones presented in Sec.~\ref{sec:redistribution-experiment}. 
\begin{figure}[htbp]
    % \centering
    \includegraphics[width=0.75\textwidth]{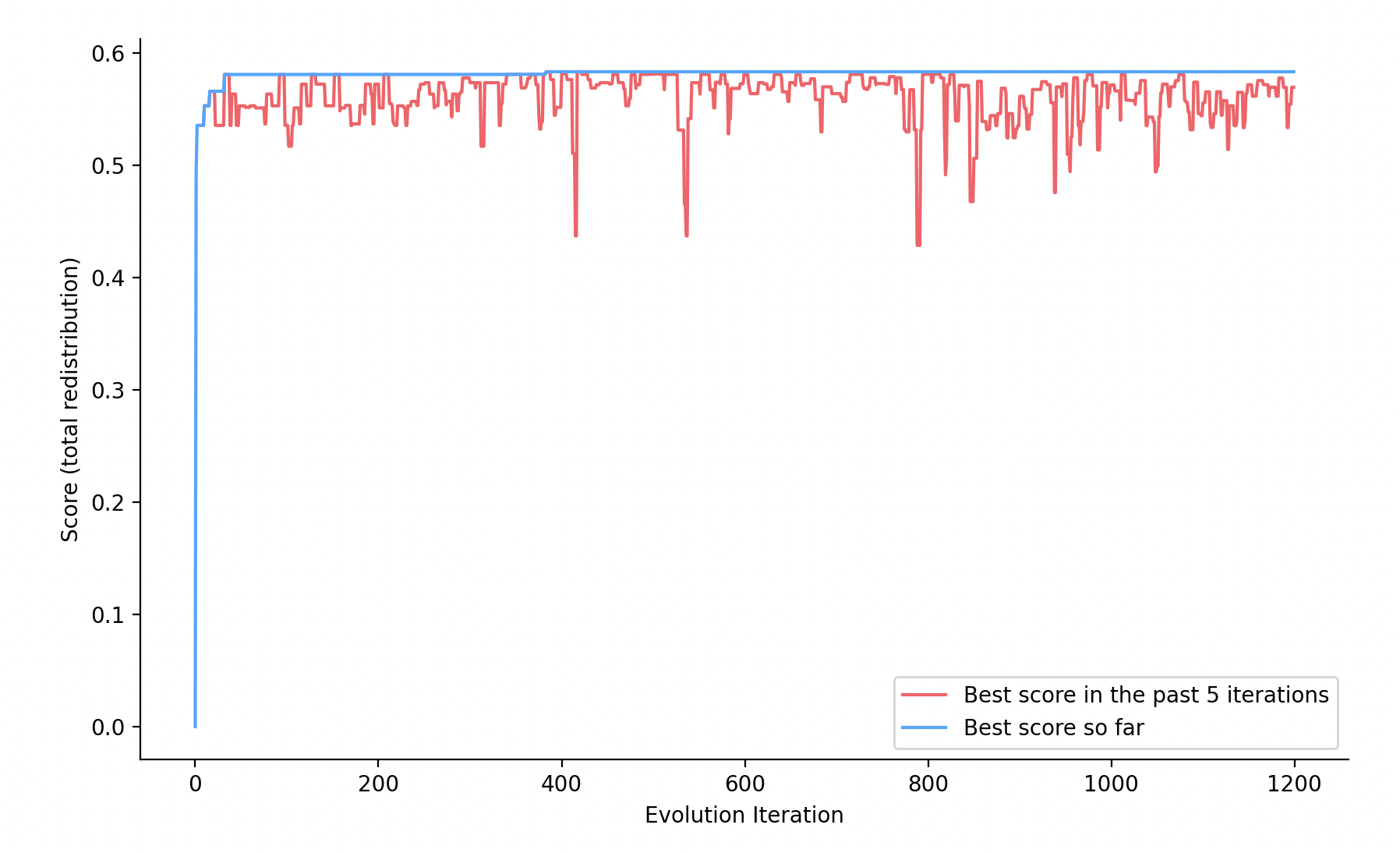}
    \caption{The best average score (so far, in last five iterations) with the evolution iteration, using AMD-LLM, on VCG redistribution mechanism design. }
    \label{fig:redistri-1}
\end{figure}

\subsubsection{Results on Other Models}

We also experimented with the Llama-3.1-70B-Instruct model for the same 4-bidder 2-item setting. 
The result is shown in Fig.~\ref{fig:fig_llama_redistri}. 

The $x$ axis gives the evolution iteration, and the $y$ axis gives the score (total redistribution value in our setting). The curve shows the trend of the current maximum score at each iteration, i.e., the best score so far, as the iteration goes on. The green boxes contain some example heuristics generated by the Llama model, from the corresponding positions in the evolutionary process pointed at by the green arrows. 

We run Llama for 4,000 iterations while the result from DeepSeek-V3 model presented in Sec.~\ref{sec:redistribution-experiment} is from fewer than 1,000 iterations. 
Compared with the result from the DeepSeek model, the final score from Llama model is worse than the one from DeepSeek. However, there might still be room for improvement if we run more iterations, because the evolutionary process is not proven to have converged. 

\begin{figure}[htbp]
    \centering
    \includegraphics[width=\textwidth]{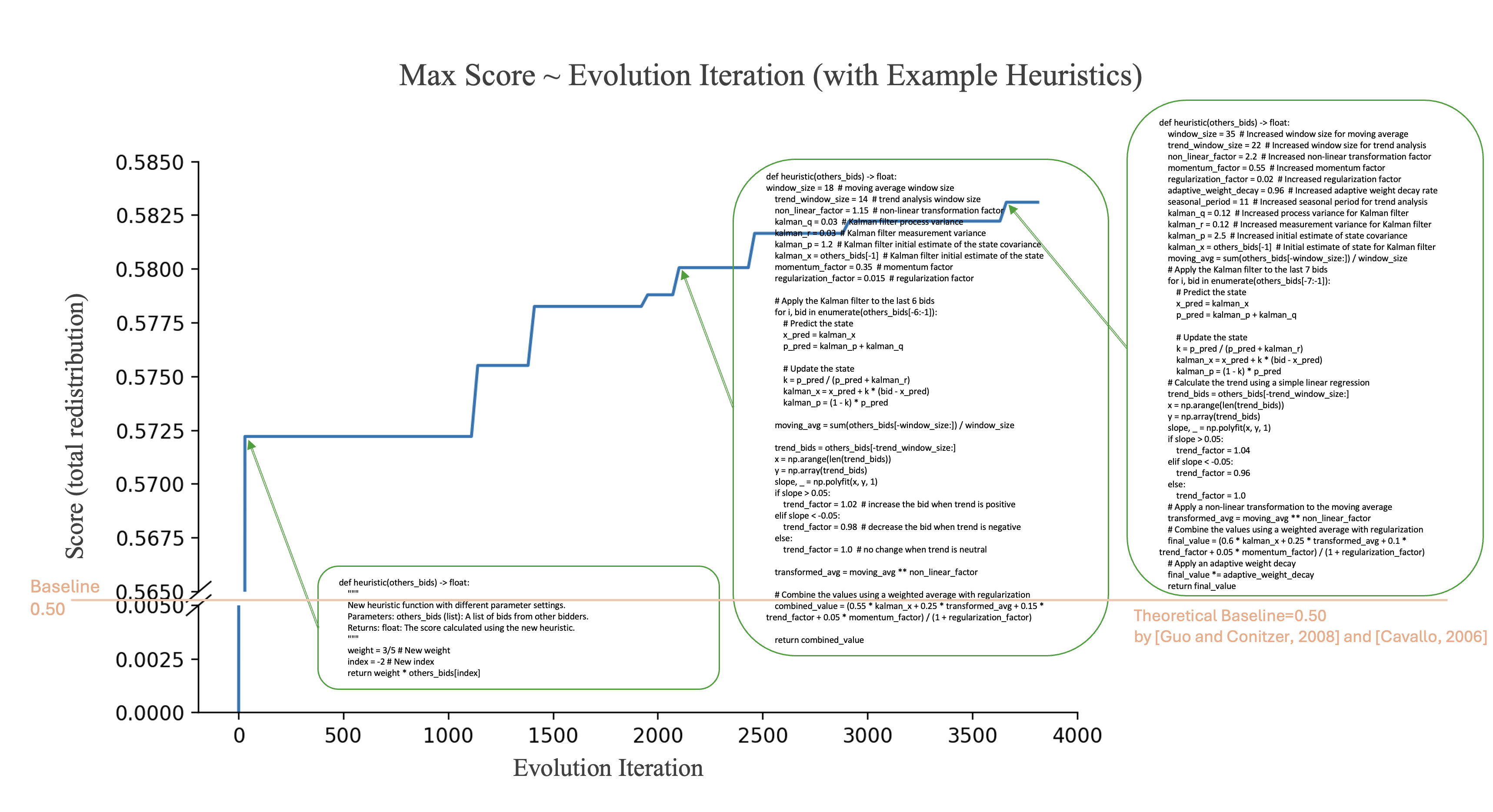}
    \caption{The curve shows the best score so far with the evolution iteration goes on. We can see the best score so far gradually increases. The green boxes contain some example heuristics generated by Llama-3.1-70B-Instruct model, from the corresponding positions in the evolutionary process pointed at by the green arrows. }
    \label{fig:fig_llama_redistri}
\end{figure}

We also did some initial trials using more models but stopped only in a few iterations. 
The results in Fig.~\ref{fig:poor-baseline-funsearch} use the setting of 5 bidders and 2 items, still with the same goal of finding the optimal redistribution mechanism. The evaluation is based on 1,000 random samples from the $U[0,1]$ distribution. 
From this result, we can observe that the model can improve upon existing methods by updating the parameters (from using linspace 1 and 0.5 to using linspace 1 and 0.1 in their code) or come up with new ideas (convolve). 

\begin{figure}[ht!]
    \centering
    \includegraphics[width=\textwidth]{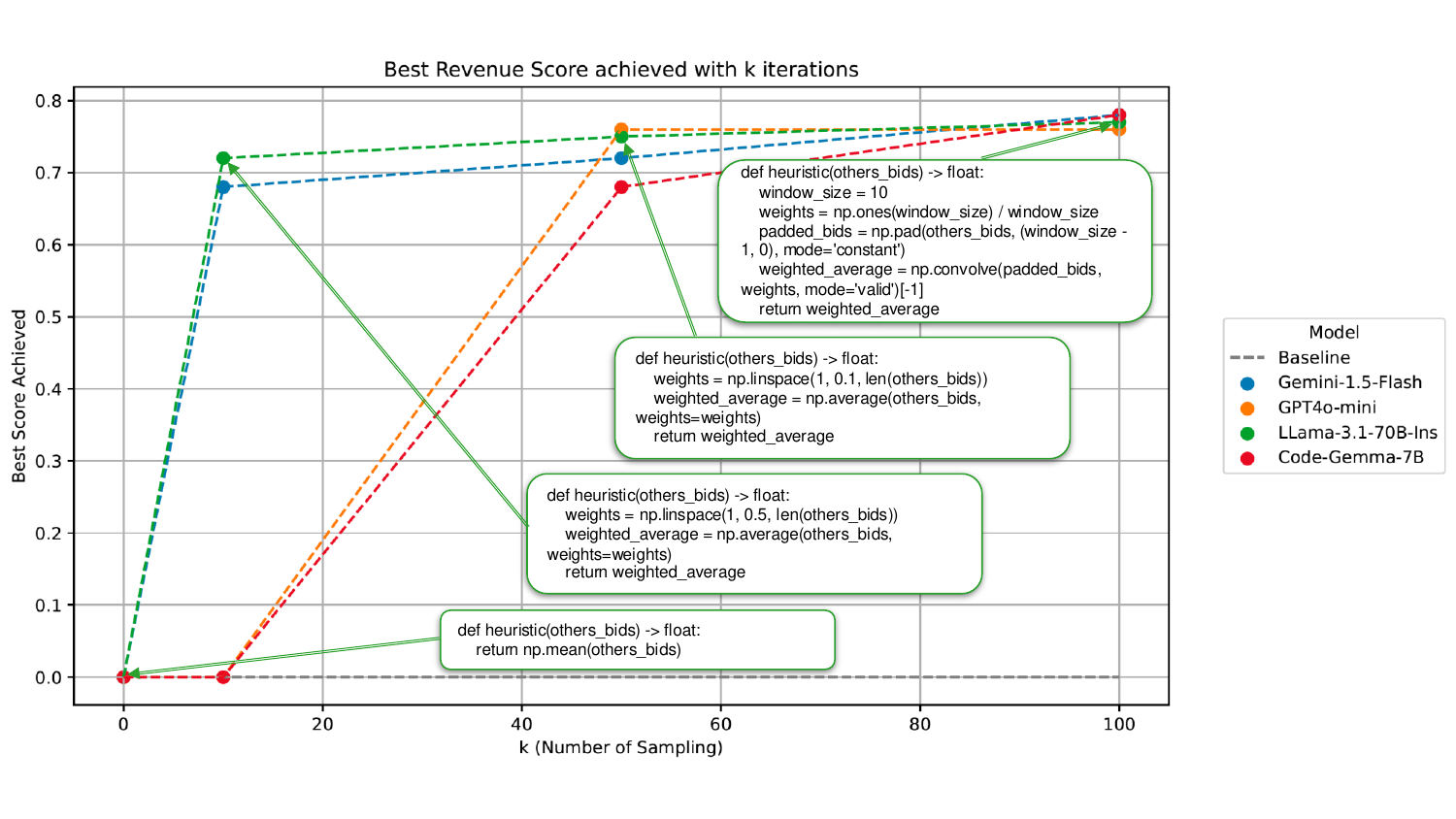}
    \caption{Best Revenue Score achieved with k iterations for several models. The 4 sample programs are heuristic functions output by the LLama-3.1-70B-Ins model from different evolution sample iterations (0, 10, 50, 100, respectively), pointed at by the green arrows. From this result, we can observe that the model can improve upon existing methods by updating the parameters (from using linspace 1 and 0.5 to using linspace 1 and 0.1 in their code), or come up with new ideas (convolve).}
    \label{fig:poor-baseline-funsearch}
\end{figure}

\section{Experiment Details on Designing Correlated Bidder Auction Mechanism}\label{apdx:correlated}

\subsection{Implementation Details}

\subsubsection{Syetem prompt}
The system prompt we use in our framework is as follows, including the problem setting and some requirements for designing the mechanism. 

\begin{tcolorbox}[title= {The System Prompt for Correlated Bidders Auction Design}, colback=gray!10, colframe=gray!50, coltitle=black]\label{prompt:correlated}  % \fontseries{sb}\selectfont
% \textbf{example code} \label{code:example}
\begin{Verbatim}[breaklines=true,fontsize=\scriptsize,breaksymbol={}]
You are a mechanism design expert. Your task is to design a heuristic function for a deterministic revenue-maximizing single-item auction with 1 item and 2 bidders where the two bidders' values are correlated, i.e., the seller aims to sell the item to one of the two correlated bidders. The heuristic function's input is bidders' bids (a vector of length 2). The heuristic function's output is the allocation function, which is a vector of length 3, each element should be in range [0, 1]; if the first index of the output vector is the largest, then allocate the item to the first bidder; if the second index is the largest, then allocate to the second bidder; if the third index is the largest, then do not allocate to any bidder. The goal of the mechanism designer is to maximize the revenue while ensuring that participants have incentives to bid truthfully (incentive compatibility) and are not worse off by participating (individual rationality). The input to the heuristic function is only the bidders' bids, which is an array of length 2; there are no other args as input, do not use other parameters without defining. The output must be a vector of length 3. If you want to use any new parameters in addition to others_bids, you must initialize that explicitly in the code of heuristic function. You should remove redundant code (i.e., if a shorter version of code has equivalent function, please use the shorter one). You only need to design one new heuristic function. Note: use 2 spaces as indent for Python code. Only output a standalone heuristic_v{version} function code, do not output anything else. You can use various functions that commonly occurs in game theory and mechanism design literature. You should make the code concise and short. 
\end{Verbatim}
\end{tcolorbox}

For some models, we also include a system prompt saying, ``You must not write annotations that are quoted by triple quotes; you can write annotation starting with \#.'' 
This is because the code snippet output by LLMs will be wrapped by three single quotation marks, like ```[code snippet]'''. 
If there are annotations that are quoted by triple quotation marks, then the parsing process sometimes causes an error. We thus include this additional requirement in the prompt, and it turns out that the LLMs we use in the experiments in this section (Llama-3.1-70B-Instruct and DeepSeek-V3) can easily follow this instruction. 

\subsubsection{Grid Distribution}
The correlated distribution we use is a grid distribution. In such a distribution, we divide the area of $[0,1]\times[0,1]$ into $5 \times 5=25$ smaller squares, each with side length 0.2 (each square is $0.1\times 0.1$). In each sub-square, the probability density is uniform. We represent our grid distribution using the following matrix $M$. 
\[
M =
\begin{bmatrix}  
1.397 & 1.168 & 1.428 & 1.043 & 0.751 \\  
0.694 & 1.085 & 0.969 & \textbf{1.083} & 1.034 \\  
1.403 & 0.641 & 1.424 & 1.225 & 1.328 \\  
1.582 & 0.744 & 1.298 & 0.158 & 0.142 \\  
1.272 & 1.117 & 0.557 & 0.513 & 0.946  
\end{bmatrix}  
\]
When a bid profile $(x,y)$ falls into a sub-square -- for example, suppose it falls into the submatrix $[0.2, 0.4]\times [0.6, 0.8]$, which corresponds to the 1.083 value in the matrix -- then the probability density at point $(x,y)$ in this grid distribution is 1.083 with an additional normalization term that 
ensures that the distribution function over the whole area integrates to 1.
%integrates the distribution function on all area to 1. 

\subsection{Evolution Dynamics}

The evolution dynamics are shown in Fig.~\ref{fig:correlated-dynamics-1} and Fig.~\ref{fig:correlated-dynamics-2}. 
There are also some examples of heuristics selected from the evolutionary process shown in Fig.~\ref{fig:correlated-dynamics-1}. We can see that the AMD-LLM framework is able to evolve to find completely different ideas or to explore by modifying some parameters from previously generated heuristics. 

\begin{figure}[htbp]
    \centering
    \includegraphics[width=\textwidth]{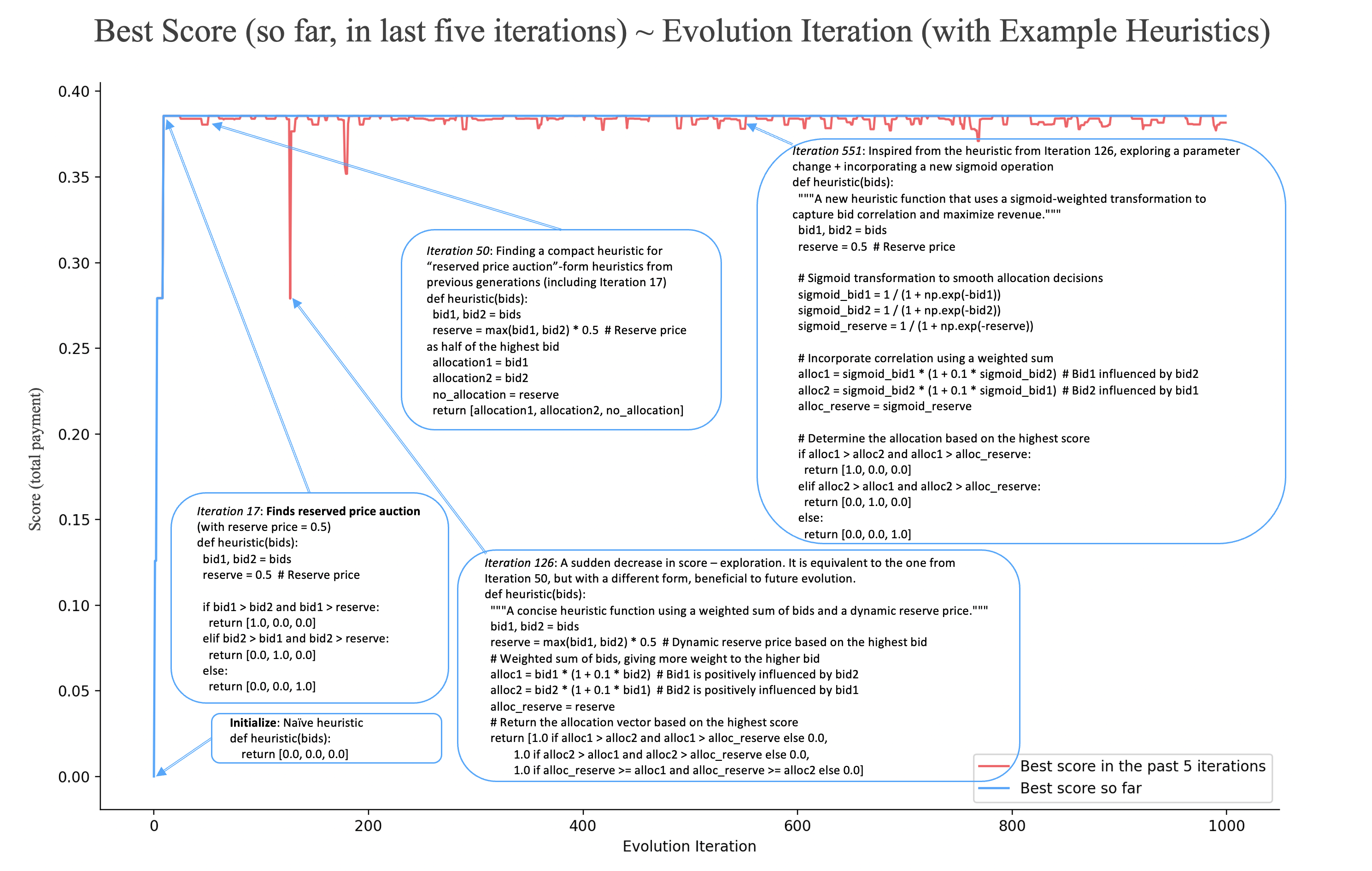}
    \caption{The best average score (so far, in last five iterations) by evolution iteration, using AMD-LLM, on auction design with correlated bidders. The score is the average total payment. The blue boxes contain some example heuristics generated, from the corresponding positions in the evolutionary process, pointed at by the blue arrows.}
    \label{fig:correlated-dynamics-1}
\end{figure}

\begin{figure}[htbp]
    \centering
    \includegraphics[width=0.75\textwidth]{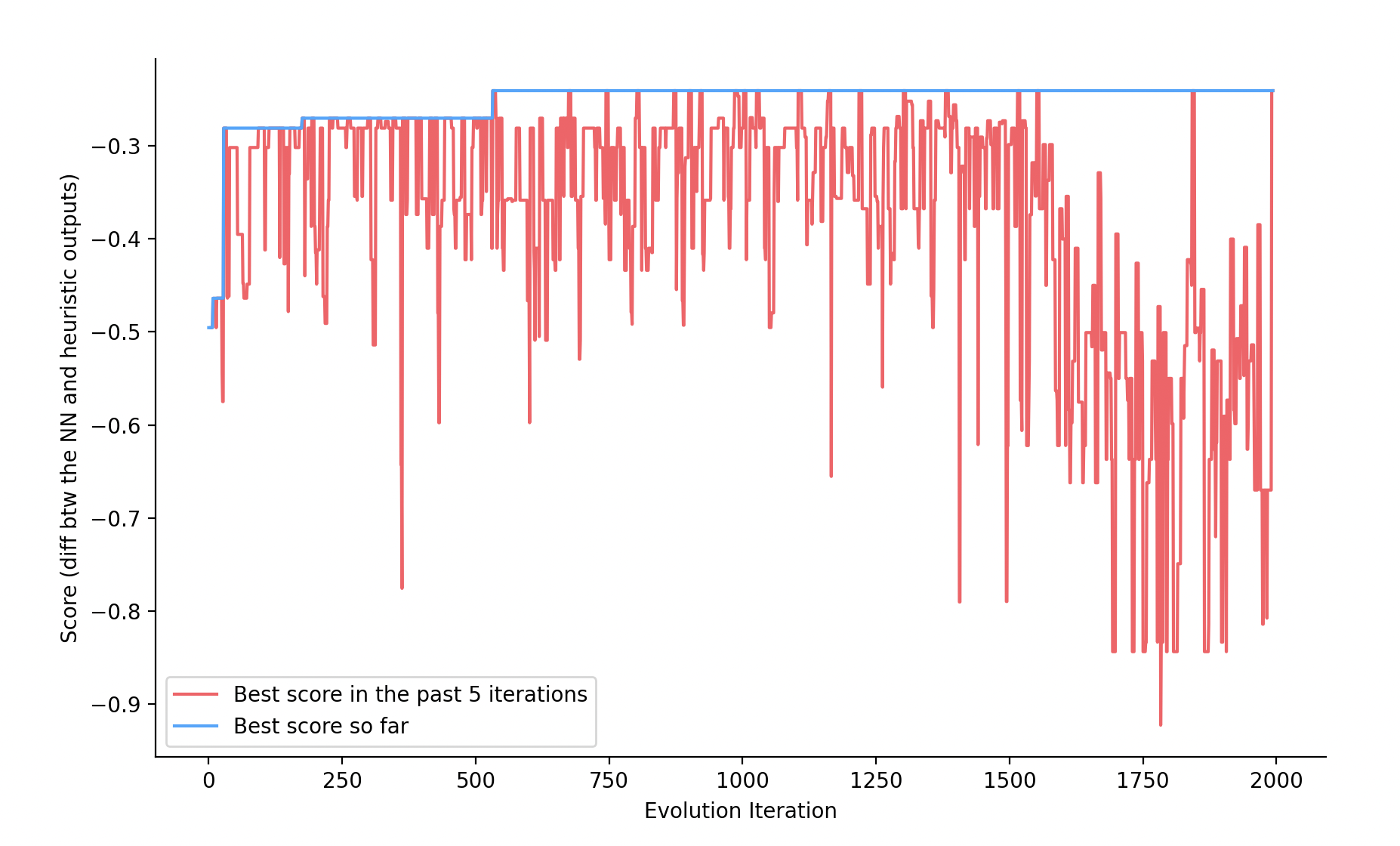}
    \caption{The best average score (so far, in last five iterations) by evolution iteration, using NNA-AMD-LLM, on auction design with correlated bidders. The score is the negated average L1 difference between the heuristic's output and the NN output. }
    \label{fig:correlated-dynamics-2}
\end{figure}

\subsection{More Experimental Results}

Other LLM-generated heuristic examples include dynamic reserve price mechanisms. An example heuristic is as follows. 

\vspace{0.4cm}
% ,caption={An example of generated heuristic function}
\begin{lstlisting}[mathescape=true]
def heuristic(bids):
  """A new heuristic function that uses a multiplicative correlation factor."""
  bid1, bid2 = bids
  correlation_factor = 1.0 - abs(bid1 - bid2)  # Higher correlation if bids are close
  
  # Allocation scores based on bids and correlation
  alloc1 = bid1 * correlation_factor
  alloc2 = bid2 * correlation_factor
  alloc_reserve = 0.5 * (1 - correlation_factor)  # Reserve allocation increases with low correlation
  
  # Normalize the allocation scores to sum to 1
  total = alloc1 + alloc2 + alloc_reserve
  alloc1 /= total
  alloc2 /= total
  alloc_reserve /= total
  
  return [alloc1, alloc2, alloc_reserve]
\end{lstlisting}
\vspace{0.4cm}

In this heuristic, there is a correlation\_factor, which is larger when the two bids are close, so this value actually evaluates the similarity between bids. The reserve allocation increases when bids diverge, creating a non-linear penalty for dissimilar bids. It favors reserve allocation to avoid poor sales when the two bids diverge (it appears that there is less competition in this case because the larger bid dominates). On the other hand, this mechanism allocates more aggressively to high bidders when their bids cluster.

These generated heuristics can offer valuable insights to mechanism designers, helping the development of better manual mechanisms. Furthermore, they can guide the design of improved pre-set evolution strategies for our framework that integrate these insights, potentially enhancing performance and accelerating convergence.

\end{document}